\documentclass[journal]{IEEEtran}
\IEEEoverridecommandlockouts	 
\usepackage{cite}
\usepackage{bm}
\usepackage{amsthm}
\usepackage{amsmath,amssymb,amsfonts}
\usepackage[normalem]{ulem}
\usepackage{textcomp}
\usepackage{graphicx,color}
\usepackage{mathrsfs}
\usepackage[vlined,ruled,linesnumbered]{algorithm2e}
\usepackage{comment}
\usepackage{float}
\usepackage{soul}
\usepackage{url}
\usepackage{color}
\usepackage{dsfont}
\usepackage{bbm}
\usepackage{booktabs}
\usepackage{array}
\usepackage[table]{xcolor} 
\usepackage{yfonts}
\usepackage[normalem]{ulem}

\usepackage{tikz}
\usepackage{pgfplots}
\usepgfplotslibrary{fillbetween}

\newcommand{\stkout}[1]{\ifmmode\text{\sout{\ensuremath{#1}}}\else\sout{#1}\fi}
\usepackage{arydshln}
\usepackage{multicol}
\usepackage{amsmath}

\newtheorem{theorem}{Theorem}[section]
\newtheorem{lemma}[theorem]{Lemma}
\newtheorem{definition}{Definition}

\newtheorem{proposition}[theorem]{Proposition}

\newcommand{\until}[1]{\{1,\dots,#1\}}

\usepackage{tabularx}
\usepackage{multirow}
\usepackage{makecell}

\newcommand{\tguomargin}[1]{\marginpar{\color{purple}\tiny\ttfamily{TGUO:} #1}}

\newcommand\aamsout{\bgroup\markoverwith{\textcolor{violet}{\rule[0.5ex]{2pt}{1pt}}}\ULon}

\newcommand{\T}{\mathsf{T}} 

\newcommand{\mc}{\mathcal}

\DeclareSymbolFont{bbold}{U}{bbold}{m}{n}
\DeclareSymbolFontAlphabet{\mathbbold}{bbold}

\newcommand{\vect}[1]{\mathbbold{#1}}

\newcommand{\R}{\mathbb{R}}

\newcommand\oprocendsymbol{\hbox{$\square$}}
\newcommand\oprocend{\relax\ifmmode\else\unskip\hfill\fi\oprocendsymbol}

\renewcommand{\theenumi}{(\roman{enumi})}

\newcommand*{\QEDA}{\hfill\ensuremath{\blacksquare}}%

\makeatletter
\let\NAT@parse\undefined
\makeatother

\usepackage[urlcolor=blue,linkcolor=blue, citecolor=blue]{hyperref}

\graphicspath{{Images/}}

\title{\LARGE \bf Oscillator-Based Associative Memory with Exponential Capacity:\\ Theory, Algorithms, and Hardware Implementation}

\author{Arie Ogranovich$^{\dagger}$,  Taosha Guo$^{\dagger}$, Arvind
  R. Venkatakrishnan, Madelyn Shapiro,\\  Francesco Bullo, and Fabio Pasqualetti 
\thanks{This material is based upon work supported in part by ARO award W911NF-24-1-0228.
T. Guo and F. Pasqualetti are with the
Department of Electrical Engineering and Computer Science at the 
University of California, Irvine, Irvine, CA, 92697, USA. Emails: \href{mailto:taoshag@uci.edu}{\{\texttt{taoshag}},
\href{mailto:fabiopas@uci.edu}{\texttt{fabiopas\}@uci.edu}},
A. Ogranovich, A. R. Venkatakrishnan, M. Shapiro, and F. Bullo are with the
Department of Mechanical Engineering at the 
University of California, Santa Barbara, Santa Barbara, 93106, USA.
\href{mailto:arieogranovich@ucsb.edu}
{\{\texttt{arieogranovich}},\href{mailto:arvindragghav@ucsb.edu}
{\{\texttt{arvindragghav}},\href{mailto:madelynshapiro@ucsb.edu}
{\{\texttt{madelynshapiro}},
\href{mailto:bullo@ucsb.edu}{\texttt{bullo\}@ucsb.edu}}. We acknowledge assistance from Gemini in providing LaTeX code to format Fig. 6.
}
}

\usepackage{tikz}
\usetikzlibrary{positioning}

\begin{document}

\maketitle
\thispagestyle{empty}

\begin{abstract}
  Associative memory systems enable content-addressable storage and
  retrieval of patterns, a capability central to biological
  neural computation and artificial intelligence. Classical
  implementations such as Hopfield networks face fundamental
  limitations in memory capacity, scaling at most linearly with
  network size. We present an associative memory architecture based on
  Kuramoto oscillator networks with honeycomb topology in which memories
  are encoded as stable phase-locked configurations. The honeycomb
  network consists of multiple cycles that share nodes in a chain-like
  arrangement, creating a one-dimensional lattice of chained
  +loops. We prove that this architecture achieves exponential memory
  capacity: a network of $N$ oscillators can store
  $(2\lceil n_c/4 \rceil - 1)^m$ distinct patterns, where $m$
  honeycomb cycles each contain $n_c$ oscillators. Moreover, we
  fully characterize all stable configurations and prove that
  each memory's basin of attraction maintains a guaranteed minimum
  size independent of network scale. Simulations using
  charge-density-wave (CDW) oscillators validate predicted
  phase-locking behavior, demonstrating practical realizability in
  neuromorphic hardware.
\end{abstract}

\begingroup
\renewcommand\thefootnote{}
\footnotetext{$^\dagger$ These authors contributed equally.}
\endgroup

\section{Introduction}\label{sec:intro}
Associative memory, the ability to store and retrieve patterns through
partial or noisy cues, is a fundamental computational primitive in
both biological and artificial neural systems
\cite{JJHopfield:82}. Unlike conventional address-based memory where
storage locations are accessed through explicit indices, associative
memory enables content-addressable recall: presenting a corrupted or
incomplete pattern causes the system to converge to the stored memory
that most closely matches the input. This capability is essential for
pattern recognition \cite{krotov2016DenseAssociativeMemory}, error
correction, and cognitive functions such as categorization and
generalization from examples. The development of efficient, scalable
associative memory architectures remains a central challenge in
neuromorphic computing and brain-inspired artificial intelligence
\cite{kudithipudi2025NeuromorphicComputingScale}.

The Hopfield network \cite{JJHopfield:82}, a canonical model of
associative memory, stores patterns as stable fixed points of a
recurrent neural network with symmetric connections. While elegant in
its simplicity, the standard Hopfield model suffers from severe
capacity limitations: a network of $N$ neurons can reliably store only
$O(N/\log N)$ patterns before spurious attractors and retrieval errors 
dominate \cite{LP-IG-DG:86}. Recent advances have pushed this bound
higher through modified energy functions and higher-order interactions
\cite{demircigil2017exponentialAssociativeMemory,
  krotov2016DenseAssociativeMemory,
  hillar2018RobustExponentialMemory}, achieving polynomial or even
exponential capacity in specialized settings. However, these
approaches often require dense all-to-all connectivity or complex
training procedures that are challenging to implement in
energy-efficient hardware. 

Oscillator-based associative memories offer an alternative paradigm
that is naturally suited to physical implementation. Networks of
coupled oscillators—realized through spin-torque nano-oscillators
\cite{torrejon2017NeuromorphicComputingNanoscale}, optical resonators
\cite{musa2025DenseAssociativeMemory}, or electronic circuits
\cite{choi2025HardwareImplementationRing}—can encode memories as
stable phase configurations, where the relative phases of oscillators
represent stored patterns. Early work demonstrated that oscillatory
networks can achieve perfect retrieval with specific connectivity
patterns \cite{TN-FCH-YCL:04, TN-YCL-FC:04, izhikevich1999weakly}, but
questions of capacity scaling and basin of attraction robustness
remained largely open. The Kuramoto model \cite{Doerfler2014}, which
describes the dynamics of weakly coupled phase
oscillators, provides a tractable mathematical framework for analyzing
such systems.

Several recent works have explored oscillator-based associative memory
architectures. Nishikawa et al. \cite{TN-FCH-YCL:04, TN-YCL-FC:04}
demonstrated perfect retrieval in specific oscillator network
topologies but did not provide capacity scaling results. Hölzel and
Krischer \cite{RWH-KK:15} and Zhao et al. \cite{XZ-ZL-XX:20} analyzed
Hebbian learning rules for Kuramoto networks, focusing on stability of
learned patterns rather than fundamental capacity limits. More
recently, Nagerl and Berloff
\cite{nagerl2025HigherOrderKuramotoOscillator} proposed higher-order
Kuramoto models for dense associative memory, achieving capacity
scaling through increased coupling complexity. However, these
approaches either require carefully designed learning algorithms,
higher-order interactions, or do not provide guarantees on the memory
capacity or basin of attraction sizes. In contrast, our work
establishes exponential capacity with fixed, local coupling in a
simple honeycomb topology, and proves that basin volumes maintain a
guaranteed minimum size independent of network scale—these are
essential properties for robust hardware implementation.

In this paper, we demonstrate that Kuramoto oscillator networks with
honeycomb topology achieve exponential memory capacity while
maintaining desirable stability properties. The honeycomb network,
consisting of $m$ cycles of size $n_c$ that share nodes in a
chain-like arrangement, provides a sparse connectivity structure that
is amenable to hardware implementation while enabling rich
phase-locking dynamics. Unlike densely connected networks, the
honeycomb topology requires only local coupling between oscillators,
reducing wiring complexity and energy consumption when implemented in
hardware. We show that this simple structure supports
$(2\lceil n_c/4 \rceil - 1)^m$ distinct stable phase-locked
configurations, exponentially scaling with the number of cycles
$m$. Moreover, each stable configuration possesses a basin of
attraction with guaranteed minimum size that does not degrade as the
network grows—a critical property for reliable pattern retrieval in
the presence of noise or perturbations.

The main contributions of this paper are as follows. First, we prove that
Kuramoto oscillator networks on honeycomb graphs achieve exponential memory
capacity, storing $(2\lceil n_c/4 \rceil - 1)^m$ distinct patterns in a network
with $m$ cycles of size $n_c$. Second, we provide a complete characterization of
all stable phase-locked configurations, showing that stability is determined by
phase-cohesiveness: adjacent oscillators must have phase differences
less than $\pi/2$.  Third, we provide explicit encoding and decoding schemes to
map discrete memory patterns to unique phase-locked configurations. Fourth, we
show that each memory's basin of attraction maintains a guaranteed minimum
volume independent of network size, ensuring robust retrieval even as capacity
scales.  Finally, we validate the theoretical predictions through simulations of
charge-density-wave oscillator networks, demonstrating practical realizability
in neuromorphic hardware. A preliminary version of these results was presented
in \cite{guo2025OscillatoryAssociativeMemory}. Compared to
\cite{guo2025OscillatoryAssociativeMemory}, here we provide a full, revised
proof of the stable-configuration characterization, a novel analysis of the
basin of attraction of the desired phase-locked configurations, an updated and
more efficient algorithm to encode and decode memories as stable phase-locked
configurations of the network, and describe a pathway for hardware
implementation of the proposed associative memory scheme.

The remainder of this paper is organized as
follows. Section~\ref{sec:preliminaries} introduces the Kuramoto
oscillator model and the honeycomb network
topology. Section~\ref{sec:main_results} presents our main capacity
theorem, encoding scheme, and basin of attraction
analysis. Section~\ref{sec:cdw} describes the CDW oscillator
implementation and numerical validation. Section~\ref{sec:conclusion}
concludes the paper and discusses directions for future work.

\section{Preliminaries}\label{sec:preliminaries}

\subsection{Kuramoto Oscillator Networks}
Consider a network of $n$ coupled phase oscillators described by the
Kuramoto model:
\begin{equation}\label{eq:kuramoto}
  \dot{\theta}_i = \omega_i + \sum_{j=1}^n a_{ij} \sin(\theta_j -
  \theta_i), \quad i = 1, \ldots, n ,
\end{equation}
where $\theta_i \in \mathbb{S}^1$ denotes the phase of oscillator $i$,
$\omega_i \in \mathbb{R}$ is its natural frequency, and
$a_{ij} \geq 0$ represents the coupling strength from oscillator $j$
to oscillator $i$. The adjacency matrix $A = [a_{ij}]$ encodes the
network topology: $a_{ij} > 0$ if oscillators $i$ and $j$ are
connected, and $a_{ij} = 0$ otherwise.

Throughout this paper, unless specified differently, we focus on the
case of identical natural frequencies, setting $\omega_i = 0$ for all
$i$ without loss of generality (any common frequency can be removed by
a rotating frame transformation), and symmetric adjacency matrix
(undirected coupling, i.e., $a_{ij} = a_{ji} = 1$ for all
$i,j$).\footnote{The choice of unit weights is for presentation purposes; our results apply when the weights are equal and positive.} The dynamics then simplify~to:
\begin{equation}\label{eq:kuramoto_identical}
  \dot{\theta}_i = \sum_{j=1}^n a_{ij} \sin(\theta_j - \theta_i), \quad
  i = 1, \ldots, n .
\end{equation}

A \emph{phase-locked configuration}
$\theta^* = (\theta_1^*, \ldots, \theta_n^*)$ is an equilibrium of
\eqref{eq:kuramoto_identical} satisfying $\dot{\theta}_i = 0$ for all
$i$. At such configurations, oscillators maintain fixed phase
relationships. We say that a phase-locked configuration is \emph{phase-cohesive} if all phase differences are less than $\frac{\pi}{2}$, i.e., if $\theta_i-\theta_j \in ]-\frac{\pi}{2},\frac{\pi}{2}[$ for all oscillators $i,j$.

\subsection{Honeycomb Network Topology}
We now define the honeycomb network structure that forms the basis of
our associative memory architecture.



\begin{definition}{\emph{\textbf{\emph{(Honeycomb
          graph)}}}}\label{def:honeycomb}
  For integers $m \geq 1$ and $n_c \geq 5$, the \emph{honeycomb graph}
  $\mathcal{G}_m^{n_c}$ is constructed as follows:
  \begin{enumerate}
  \item Create $m$ cycles, denoted
    $\mathcal{C}_1, \mathcal{C}_2, \ldots, \mathcal{C}_m$, where
    each cycle initially contains $n_c$ nodes.
  \item For each $p \in \{1, \ldots, m-1\}$, merge one node from
    cycle $\mathcal{C}_p$ with one node from cycle $\mathcal{C}_{p+1}$ 
    into a single shared node.
  \end{enumerate}
  The resulting network has $n = m(n_c - 1) + 1$ nodes and forms a
  one-dimensional chain of interlocking cycles. \oprocend
\end{definition}

\begin{figure}[t]
  \centering
  \includegraphics[width=0.85\columnwidth,trim={0cm 0cm 0cm
    0cm},clip]{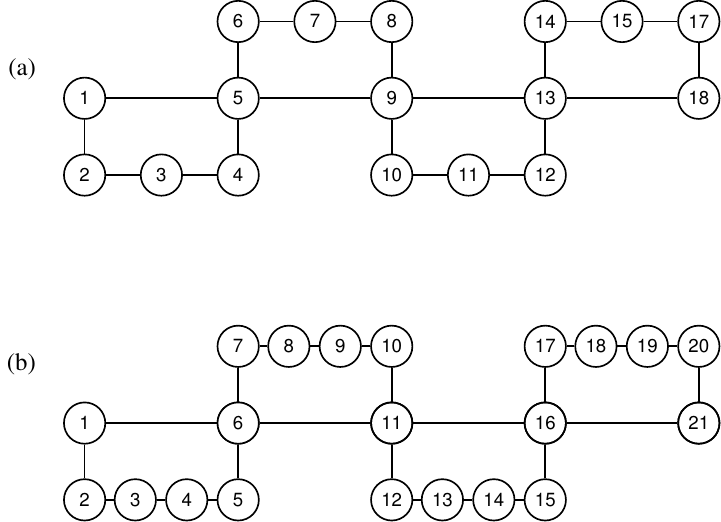}
  \caption{This figure shows the 1D honeycomb network described by the
    graph $\mc H_m^{n_\text{c}}$. In (a), the network has $m=5$
    cycles, and each cycle is a pentagon with
    $n_c =5$ nodes. In (b), the network has $m=5$ cycles, and each cycle is a pentagon $n_c = 6$ nodes.}
  \label{fig:honeycomb_example}
\end{figure}

Figure~\ref{fig:honeycomb_example} illustrates a honeycomb network
with $m=3$ cycles of size $n_c=5$, resulting in $n = 13$ total nodes.
While various node numbering schemes and choices of which nodes to
share between cycles are possible when constructing honeycomb
networks, throughout this paper our examples and numerical experiments will be of honeycomb networks with the following particular structure:
cycle $\mathcal{C}_p$ contains nodes $(p-1)(n_c-1) + 1$ through
$p(n_c-1) + 1$, with the last node of cycle $\mathcal{C}_p$ being the
first node of cycle $\mathcal{C}_{p+1}$.  This numbering and
connection pattern is illustrated in
Figure~\ref{fig:honeycomb_example}. The honeycomb topology provides
sparse local connectivity while enabling rich synchronization dynamics
in coupled oscillator networks.  As we show in
Section~\ref{sec:main_results}, each cycle can encode independent
phase patterns, and the shared nodes couple these patterns together,
allowing the oscillator network to store and retrieve complex
multi-cycle configurations. 

\subsection{Associative Memory via Phase-Locked Configurations}
In classical Hopfield networks \cite{JJHopfield:82}, memories are
directly encoded as binary vectors $\{-1, +1\}^N$, with each neuron's
state corresponding to one bit. The network is typically trained using
Hebbian learning rules to make desired binary patterns stable fixed
points of the dynamics. During recall, the network is initialized with
a partial or corrupted version of a stored pattern, and the dynamics
drive the system toward the nearest stored binary pattern, which
serves as a stable attractor. This content-addressable recall
mechanism enables robust pattern retrieval despite input noise or
incompleteness.


In oscillator-based associative memory, we instead encode memories as
stable phase-locked configurations of the Kuramoto network. However,
unlike Hopfield networks where the correspondence between network
states and memories is immediate, phase-locked configurations are
characterized by continuous phase variables and their
differences. This necessitates a mapping between the space of
phase-locked configurations—or equivalently, the pattern of phase
differences across the network—and the discrete space of memories to
be stored. The design of such an encoding scheme requires addressing
several fundamental questions: characterizing how many distinct stable
phase-locked configurations a given network topology can support,
establishing an explicit mapping that associates each memory with a
unique stable configuration, and analyzing the basin of attraction
properties that ensure reliable retrieval from partial or noisy
inputs. The remainder of this paper addresses these questions for
Kuramoto networks on honeycomb graphs. 

\section{Main Results}\label{sec:main_results}

\subsection{Exponential Memory Capacity}
We now present our main result: Kuramoto oscillator networks on
honeycomb graphs achieve exponential memory capacity. We establish the
exact number of distinct stable phase-locked configurations and
provide an explicit encoding scheme that maps discrete memories to
these configurations.

\begin{theorem}{\emph{\textbf{\emph{(Exponential capacity of honeycomb Kuramoto
          networks)}}}}\label{thm:capacity}
  Consider the Kuramoto network \eqref{eq:kuramoto_identical} on the
  honeycomb graph $\mathcal{G}_m^{n_c}$ with cycles
  $\mc C_1,\ldots, \mc C_m$. Then,
  \begin{enumerate}
    
  \item the network has a set $\mc S$ of $N_{\text{eq}}$ locally
    exponentially stable phase-locked configurations (up to rotational
    equivalence), where
    \begin{equation}\label{eq:capacity}
      N_{\text{eq}} = \left(2\left\lceil \frac{n_c}{4} \right\rceil - 1\right)^m;
    \end{equation}
    
  \item each phase-locked configuration in $\mathcal{S}$ features
    constant phase differences within each cycle, with adjacent oscillators in
    $\mathcal{C}_p$, $p = 1, \ldots, m$, having phase difference
    $\frac{2\pi}{n_c} k_p$ for some integer
    $k_p \in \{-\lceil n_c/4 \rceil + 1, \ldots, \lceil n_c/4 \rceil -
    1\}$;

      \item all phase-locked configurations not in $\mc S$ are unstable.
  \end{enumerate}
\end{theorem}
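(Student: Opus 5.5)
We organize the proof around the cycle-space structure of $\mathcal{G}_m^{n_c}$ and the standard gradient/Hessian characterization of Kuramoto equilibria. The identical Kuramoto model \eqref{eq:kuramoto_identical} is the gradient flow of
\[
E(\theta)=-\sum_{\{i,j\}\in\mathcal{E}}\cos(\theta_i-\theta_j).
\]
The Jacobian at an equilibrium is minus a weighted Laplacian $L(w)$ with edge weights $w_{ij}=\cos(\theta_i^*-\theta_j^*)$. Modulo rotation, local exponential stability therefore means that $L(w)$ is positive definite on $\vect{1}^\perp$, and any negative eigenvalue of $L(w)$ gives instability.

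\textbf{Equilibria.} Since the graph is a chain of cycles glued at cut vertices, every edge lies in exactly one cycle. Writing $\Delta_e=\theta_j-\theta_i$ for the oriented edge differences, the equilibrium condition says that the flow $\sin(\Delta_e)$ is conservative: it has zero divergence at every node. On a graph whose cycle space has the basis $\mathcal{C}_1,\dots,\mathcal{C}_m$, with pairwise edge-disjoint cycles sharing only cut vertices, every divergence-free flow is a sum of circulations. Hence $\sin\Delta_e=s_p$ is constant along each oriented cycle $\mathcal{C}_p$. Within $\mathcal{C}_p$ each $\Delta_e$ is then either $\alpha_p$ or $\pi-\alpha_p$, with $\alpha_p=\arcsin s_p\in[-\pi/2,\pi/2]$. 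Finally, the phases must close up around the cycle: $\sum_{e\in\mathcal{C}_p}\Delta_e\in 2\pi\mathbb{Z}$.

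\textbf{Stability.} For the stable direction we use the phase-cohesive case, where all $\Delta_e=\alpha_p\in(-\pi/2,\pi/2)$. Then every weight $w_e=\cos\alpha_p>0$, so $L(w)$ is the Laplacian of a connected positively weighted graph and is positive definite on $\vect{1}^\perp$. The closure condition forces $n_c\alpha_p=2\pi k_p$. The requirement $|2\pi k_p/n_c|<\pi/2$ is equivalent to $|k_p|<n_c/4$, i.e. $|k_p|\le\lceil n_c/4\rceil-1$. The choices of $k_p$ in the different cycles are independent, since cut vertices let us concatenate phase profiles freely, and distinct tuples $(k_1,\dots,k_m)$ give configurations that are distinct modulo rotation. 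This yields exactly $(2\lceil n_c/4\rceil-1)^m$ stable configurations, proving parts (i) and (ii) once we know there are no others.

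\textbf{Instability of everything else (main obstacle).} Any other equilibrium has, in some cycle $\mathcal{C}_p$, either an edge with $\cos\Delta_e<0$ or an edge with $|\Delta_e|=\pi/2$. The boundary case $|\alpha_p|=\pi/2$ gives zero weights; there I would argue degeneracy and instability via higher-order terms, or exclude it by closure except in special cases. The key case is a cycle containing $r\ge1$ edges with $\pi-\alpha_p$ (negative weight $-\cos\alpha_p$) and $n_c-r$ edges with weight $\cos\alpha_p>0$. If $\cos\alpha_p<0$ the roles of the two edge types swap, so the same argument applies.

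I would use the known criterion for a weighted cycle: the restricted Laplacian is positive semidefinite on $\vect{1}^\perp$ iff at most one weight is negative and $\sum_e 1/w_e\le 0$ (the effective resistance condition). Because the cycle attaches to the rest of the graph only at cut vertices, a destabilizing test vector supported on the cycle (shifted to vanish at the cut vertices, or using the Schur-complement argument) carries over to the whole Laplacian. With $r\ge2$ negative edges the restricted Laplacian is indefinite. With $r=1$, $\sum 1/w_e=(n_c-1)/\cos\alpha-1/\cos\alpha>0$ for $n_c\ge3$, so it is again indefinite. The remaining case $r=0$ with $|\alpha_p|\ge\pi/2$ is the mirror of $r=n_c$, i.e. all weights negative, which is clearly unstable.

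The step that needs the most care is this cut-vertex localization of a negative eigenvalue, together with the borderline $\pi/2$ cases. I would handle it by exhibiting explicit test vectors $x$ supported on a single cycle with $x^\top L(w)x<0$, which makes the extension to the full graph immediate.
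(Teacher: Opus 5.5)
Apart from one case you leave open, your argument is correct, and its characterization step differs from the paper's. The paper proves stability $\Leftrightarrow$ phase-cohesiveness via the Hessian form $\sum_{(i,j)\in\mathcal E}\cos(\theta_i^*-\theta_j^*)(v_i-v_j)^2$ and a ramp perturbation. It then bounds the winding numbers by $\lceil n_c/4\rceil-1$, and caps the number of phase-cohesive equilibria by citing \cite{Jafarpour2019Flows}: distinct phase-cohesive equilibria have distinct winding vectors. Your cycle-space decomposition runs $\sin\Delta_e=s_p$ on each oriented cycle, hence $\Delta_e=\alpha_p$ under cohesiveness, hence $n_c\alpha_p=2\pi k_p$. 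This makes the citation unnecessary and identifies every cohesive equilibrium explicitly. Your effective-resistance criterion does the same job as the paper's ramp test; for $r=1$ the ramp is exactly the minimizing vector. One imprecision: on an interior cycle, a destabilizing vector cannot in general be shifted to vanish at both cut vertices. Instead, extend it by constants on each component of $\mathcal G_m^{n_c}$ with the edges of $\mathcal C_p$ removed, so only the edges of $\mathcal C_p$ contribute.

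The gap is the borderline case $|\alpha_p|=\pi/2$.
\begin{itemize}
\item Since $\alpha_p=\arcsin s_p$, you always have $\cos\alpha_p\ge 0$. So your \emph{mirror} case ($r=0$, $|\alpha_p|\ge\pi/2$, all weights negative) does not exist as stated. It is exactly the borderline case: every oriented difference on $\mathcal C_p$ equals $\pi/2$ (or every one equals $-\pi/2$), and every weight is $0$.
\item Closure shows this occurs precisely when $4\mid n_c$ (e.g.\ $n_c=8$, $k_p=n_c/4$), so it cannot be excluded.
\item If the other cycles are cohesive, $L(w)$ is positive semidefinite with extra null directions. Linearization is then inconclusive, and part (iii) remains unproved for these $n_c$.
\end{itemize}
The paper closes this case with a third-order expansion. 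Concretely, orient $\mathcal C_p$ so the differences are $+\pi/2$. Take edge increments $x_e$ on $\mathcal C_p$ with $\sum_e x_e=0$, extended by constants as above. Then exactly $U(\theta^*+v)-U(\theta^*)=\sum_{e\in\mathcal C_p}\sin x_e$. Choosing $x_1=\delta$ and $x_e=-\delta/(n_c-1)$ otherwise gives $\sin\delta-(n_c-1)\sin\bigl(\delta/(n_c-1)\bigr)<0$, by strict concavity of $\sin$ on $[0,\pi]$. Hence $\theta^*$ is not a local minimum modulo rotation.

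To turn this into Lyapunov instability, you also need that a non-minimizing critical point of an analytic gradient flow is unstable. By the \L{}ojasiewicz inequality, all critical points near $\theta^*$ have energy $U(\theta^*)$. So a trajectory started below that level cannot remain near $\theta^*$, since its limit points would be nearby critical points of lower energy. The paper also leaves this step implicit.
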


\bigskip A proof of Theorem \ref{thm:capacity} is postponed to the
Appendix. To illustrate the exponential scaling in
Theorem~\ref{thm:capacity}, consider a honeycomb network with
$n_c = 5$. Since $\lceil 5/4 \rceil = 2$, there are
$(2 \cdot 2 - 1)^m = 3^m$ stable phase-locked configurations. The
network has $n = m(n_c - 1) + 1 = 4m + 1$ nodes, so the number of
stable configurations is $3^m = 3^{(n-1)/4} \approx 1.32^{n-1}$,
growing exponentially with network size. Figure~\ref{fig:phase_lock_0}
illustrates the three stable configurations for a honeycomb network
with $n_c = 5$ and a single cycle ($m=1$).
\begin{figure}[t]
  \centering
  \includegraphics[width=1\columnwidth]{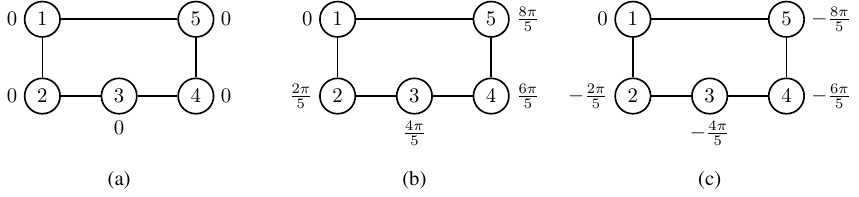}
  \caption{Stable phase-locked configurations for a single cycle with 
    $n_c=5$ ($m=1$). (a) Synchronization: all phases equal. (b) Phase 
    differences of $-2\pi/5$ between consecutive oscillators. (c) Phase 
    differences of $+2\pi/5$ between consecutive oscillators.}
  \label{fig:phase_lock_0}
\end{figure}

More generally, the memory capacity—defined as the ratio of storable
patterns $N_\text{eq}$ to network size $n$—is given by
\begin{equation}\label{eq:capacity_ratio}
  C = \frac{N_\text{eq}}{n} = \frac{\bigl(2\lceil n_c/4 \rceil - 1
    \bigr)^{(n-1)/(n_c-1)}}{n} . 
\end{equation}
This exponential scaling stands in sharp contrast to the classical
Hopfield network, where capacity is limited to $O(n/\log n)$ patterns
\cite{LP-IG-DG:86}. For the honeycomb networks with $n_c = 5$ and
$n = 4m +1$, capacity grows as $C \approx 1.32^{n-1}/n$, meaning that
even modest networks can store vastly more patterns than their
Hopfield counterparts of comparable size.

\subsection{Encoding and Decoding Scheme}
To use the oscillators' stable phase-locked configurations as
associative memory, we require a bijective mapping between discrete
memory patterns and stable configurations. In this work, we represent
memories as integers $s \in \{0, 1, \ldots, N_\text{eq} - 1\}$. For
practical applications, arbitrary binary data can be stored by first
converting it to an integer in this range using standard
binary-to-decimal conversion. We now present the encoding and decoding
scheme.

Each stable configuration in $\mathcal{S}$ is uniquely characterized
by an $m$-tuple $(k_1, \ldots, k_m)$ where
$k_p \in \{-\lceil n_c/4 \rceil + 1, \ldots, \lceil n_c/4 \rceil -
1\}$ specifies the constant phase difference pattern along cycle
$\mathcal{C}_p$ (see Theorem~\ref{thm:capacity}(ii)). Since each
component $k_p$ can take $b = 2\lceil n_c/4 \rceil - 1$ distinct
values, there are exactly $b^m = N_\text{eq}$ possible tuples,
establishing a natural correspondence between integers
$s \in \{0, 1, \ldots, N_\text{eq} - 1\}$ and stable configurations
via base-$b$ representation.

\smallskip
\noindent
\textbf{Encoding:} Given a memory index
$s \in \{0, 1, \ldots, N_\text{eq} - 1\}$, we construct the
corresponding phase-locked configuration $\theta^*$ as follows:
\begin{enumerate}
\item convert $s$ to base $b = 2\lceil n_c/4 \rceil - 1$, obtaining
  digits $\bar{s}_1, \ldots, \bar{s}_m$ where each
  $\bar{s}_p \in \{0, 1, \ldots, b-1\}$;
\item set $k_p = \bar{s}_p - (\lceil n_c/4 \rceil - 1)$ for each cycle
  $p$, shifting to the range
  $\{-\lceil n_c/4 \rceil + 1, \ldots, \lceil n_c/4 \rceil - 1\}$;
\item construct the phase configuration by setting $\theta^*_1 = 0$
  and defining phases sequentially. For each cycle $\mathcal{C}_p$
  with nodes indexed from $(p-1)(n_c-1) + 1$ to $p(n_c-1) + 1$, set
  consecutive node phases to satisfy
  $\theta^*_{i+1} - \theta^*_i = \frac{2\pi}{n_c} k_p$ for
  $i = (p-1)(n_c-1) + 1, \ldots, p(n_c-1)$.
\end{enumerate}

\smallskip
\noindent
\textbf{Decoding:} Given a phase-locked configuration $\theta^*$, we
extract the memory index $s$ as follows:
\begin{enumerate}
\item for each cycle $\mathcal{C}_p$ with $p \in \{1, \ldots, m\}$,
  compute $k_p = \frac{n_c}{2\pi}(\theta^*_{i+1} - \theta^*_i)$ where
  $i = (p-1)(n_c-1) + 1$ (the first node of cycle $\mathcal{C}_p$);
\item set $\bar{s}_p = k_p + (\lceil n_c/4 \rceil - 1)$ for each $p$, 
  shifting from $\{-\lceil n_c/4 \rceil + 1, \ldots, \lceil n_c/4 \rceil - 1\}$ 
  to $\{0, 1, \ldots, b-1\}$;
\item Convert the base-$b$ representation $(\bar{s}_1, \ldots, \bar{s}_m)$ 
  to decimal to obtain the memory index $s$.
\end{enumerate}

Figure~\ref{fig:phase_lock} illustrates two different encoded memories for a honeycomb network with $m=2$ cycles of size $n_c=5$. 
\begin{figure}[t]
  \centering
  \includegraphics[width=0.85\columnwidth]{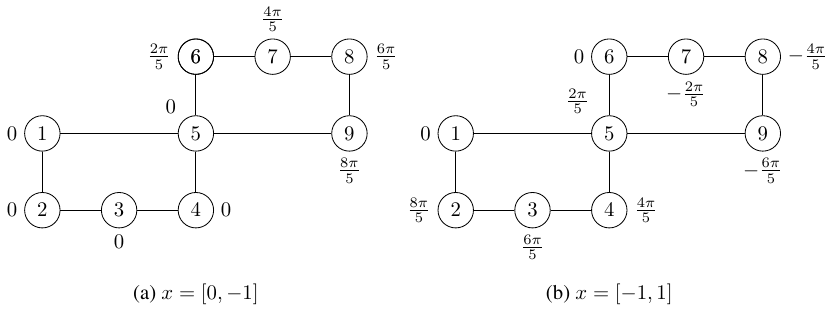}
  \caption{Stable phase-locked configurations encoding two different
    memory patterns for a Kuramoto honeycomb network with $m=2$ cycles
    and $n_c=5$. (a) Configuration with $(k_1, k_2) = (0, -1)$: the
    first cycle has equal phases while the second has phase
    differences of $-2\pi/5$. (b) Configuration with
    $(k_1, k_2) = (1, 1)$: both cycles have phase differences of
    $+2\pi/5$.}
  \label{fig:phase_lock}
\end{figure}

\subsection{Basin of Attraction Analysis}
A critical requirement for practical associative memory is robustness:
the network must reliably retrieve stored patterns even when
initialized with noisy or partial inputs. In our framework, successful
retrieval means that a perturbed initial condition converges back to
the original stored configuration, a notion that is quantified by the
basin of attraction of that configuration. We now establish that each
stable phase-locked configuration has a basin of attraction with
guaranteed minimum size, ensuring that retrieval remains successful
for perturbations below a specific threshold at each oscillator.

Two phase configurations $\theta$ and $\theta'$ are called
\emph{rotationally equivalent} if $\theta'_i = \theta_i + c$ for all
$i$ and some constant $c \in \mathbb{R}$. Since the Kuramoto dynamics
\eqref{eq:kuramoto_identical} and our encoding/decoding schemes depend
only on phase differences, rotationally equivalent configurations
represent the same physical state.

\begin{definition}{\emph{\textbf{\emph{(Basin of
          attraction)}}}}\label{def:basin}
  The basin of attraction of a phase-locked configuration $\theta^*$
  is the set of all initial conditions $\theta(0)$ such that the
  trajectory of the Kuramoto network \eqref{eq:kuramoto_identical}
  converges to a phase-locked configuration rotationally equivalent to
  $\theta^*$. \oprocend
\end{definition}

The next result establishes a guaranteed size of the basin of
attraction that does not degrade as the network grows.

\begin{theorem}{\emph{\textbf{\emph{(Basin of attraction for honeycomb Kuramoto
          networks)}}}}\label{thm:basin}
  Let $\theta^*$ be any stable phase-locked configuration of the
  Kuramoto network on $\mathcal{G}_m^{n_c}$. All initial conditions
  $\theta(0)$ satisfying
  \begin{equation}\label{eq:basin_bound}
    \max_i |\theta_i(0) - \theta^*_i| < \frac{\pi}{2} - \left( \left\lceil
      \frac{n_c}{4} \right\rceil  - 1 \right) \frac{2\pi}{n_c}  = \frac{\pi
      r}{2n_c} ,
  \end{equation}
  where $r = ((n_c - 1) \bmod 4) + 1$, converge to a phase-locked
  configuration rotationally equivalent to $\theta^*$.  \oprocend
\end{theorem}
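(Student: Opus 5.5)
The plan is to work in error coordinates $x_i=\theta_i-\theta^*_i$ and show that the spread $D(t)=\max_i x_i(t)-\min_i x_i(t)$ is a Lyapunov-type quantity. Write $\delta_{ij}=\theta^*_j-\theta^*_i$. By Theorem~\ref{thm:capacity}(ii)--(iii), every stable $\theta^*$ has $|\delta_{ij}|\le(\lceil n_c/4\rceil-1)\frac{2\pi}{n_c}=\frac{\pi}{2}-\varepsilon$ on every edge, where $\varepsilon=\frac{\pi r}{2n_c}$. Since $\theta^*$ is an equilibrium, $\sum_j a_{ij}\sin\delta_{ij}=0$, and therefore
\begin{equation*}
  \dot x_i=\sum_j a_{ij}\bigl[\sin(\delta_{ij}+x_j-x_i)-\sin\delta_{ij}\bigr].
\end{equation*}
The hypothesis \eqref{eq:basin_bound} gives $|x_i(0)|<\varepsilon$ for all $i$, hence $D(0)<2\varepsilon$.

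\textbf{Step 1: the max decreases and the min increases.} Take an index $i$ attaining $\max_k x_k$, and set $u=x_j-x_i\in[-D,0]$ for each neighbor $j$. Write $a=\delta_{ij}\in[-\frac{\pi}{2}+\varepsilon,\frac{\pi}{2}-\varepsilon]$. The elementary fact I will use is that $\sin(a+u)\le\sin a$ whenever $-\pi-2a\le u\le 0$. This holds because $a+u$ then lies in the arc $[-\pi-a,a]$, on which $\sin$ does not exceed $\sin a$. The inequality is strict when $-\pi-2a<u<0$.

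In the worst case $a=-\frac{\pi}{2}+\varepsilon$, the admissible range is $u\ge-2\varepsilon$, and this is guaranteed as long as $D<2\varepsilon$. Hence every term in $\dot x_i$ is $\le 0$ at an argmax. Symmetrically, every term is $\ge 0$ at an argmin. Using the standard Danskin/Dini-derivative argument for maxima of finitely many $C^1$ functions, $\max_i x_i$ is nonincreasing and $\min_i x_i$ is nondecreasing. Consequently the set $\{D<2\varepsilon\}$, and in particular $\{\max_i|x_i|<\varepsilon\}$, is forward invariant. This also shows that all edge phase differences stay inside $]-\frac{\pi}{2}-\varepsilon,\frac{\pi}{2}+\varepsilon[$ for all time.

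\textbf{Step 2: convergence.} The dynamics \eqref{eq:kuramoto_identical} is the gradient flow of the analytic potential $-\sum_{(i,j)}\cos(\theta_j-\theta_i)$. The trajectory is bounded (each $x_i$ stays in $]-\varepsilon,\varepsilon[$), so by the Łojasiewicz gradient inequality it converges to a single equilibrium $\bar\theta$. The limit still satisfies $\max_i|\bar\theta_i-\theta^*_i|\le\max_i|x_i(0)|<\varepsilon$, so it lies in the closure of the invariant region with $\bar D<2\varepsilon$.

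\textbf{Step 3: the only equilibria in the region are rotations of $\theta^*$.} Let $\bar x=\bar\theta-\theta^*$ and take $i$ attaining $\max\bar x$. Then $0=\dot{\bar x}_i$ is a sum of nonpositive terms, so each term vanishes. By the strictness in Step 1, which needs $u>-2\varepsilon$ and holds because $\bar D<2\varepsilon$, this forces $\bar x_j=\bar x_i$ for every neighbor $j$. Since $\mathcal{G}_m^{n_c}$ is connected, propagating this step shows that $\bar x$ is constant. Hence $\bar\theta$ is rotationally equivalent to $\theta^*$.

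Finally, note that $\frac{\pi}{2}-(\lceil n_c/4\rceil-1)\frac{2\pi}{n_c}=\frac{\pi r}{2n_c}$ with $r=((n_c-1)\bmod 4)+1$. This is a short case check on $n_c \bmod 4$.

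\textbf{Expected main obstacle.} The delicate point is the tight boundary case in Step 1. The edges with $\delta_{ij}$ near $\pm(\frac{\pi}{2}-\varepsilon)$ allow exactly a $2\varepsilon$ relative perturbation before monotonicity of $\sin$ fails. I therefore need to keep every inequality strict and track that $D$ never reaches $2\varepsilon$. This is why the bound is stated per oscillator ($|x_i|<\varepsilon$, giving $D<2\varepsilon$) rather than on differences. A secondary technical point is justifying the nonsmooth max/min argument and the single-limit convergence; I would handle the latter via Łojasiewicz, or alternatively via LaSalle combined with the isolatedness of the equilibrium modulo rotation from Step 3.
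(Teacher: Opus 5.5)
Your proposal is correct and follows essentially the same route as the paper. The paper proves a general proposition with $\gamma=\frac{\pi r}{2n_c}$: a Dini-derivative argument at the extremal node, using the same sine-monotonicity fact, shows that $\max_i|\theta_i-\theta_i^*|$ is nonincreasing; the analytic gradient-flow convergence result (Absil--Mahony--Andrews) gives convergence to an equilibrium; and neighbor propagation over the connected graph shows that the only equilibria in the ball are rotations of $\theta^*$. The only cosmetic difference is that you track $\max_i x_i$ and $\min_i x_i$ separately (the spread $D$), which yields the rotation-invariant form of the same estimate.
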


A proof of Theorem \ref{thm:basin} is provided in the Appendix. This
result has two important implications. First, the radius of the basin
is independent of the number of cycles $m$: adding more cycles to
increase memory capacity does not reduce the robustness of individual
memories. Second, the basin size scales as $O(1/n_c)$, with the
guaranteed radius lying in the range
$\frac{\pi}{2n_c} \leq \frac{\pi r}{2n_c} \leq \frac{2\pi}{n_c}$
depending on $n_c$. The largest basins occur for cycle sizes that are
multiples of 4 (e.g., $n_c = 8, 12, 16$), while the smallest occur for
$n_c = 5, 9, 13, ...$. For concrete examples, a network with $n_c = 5$
guarantees successful retrieval when each oscillator is perturbed by
up to $\frac{\pi}{10} \approx 0.31$ radians, while $n_c = 8$ provides
a larger guarantee of $\frac{\pi}{4} \approx 0.79$ radians. These are
substantial perturbation tolerances: for $n_c = 8$, each oscillator
can be perturbed by nearly a quarter of its full $2\pi$ state space
while still ensuring convergence to the correct memory.


\begin{figure}[t]
  \centering
  \begin{tikzpicture}
    \node[anchor=south west,inner sep=0] (image) at (0,0)
    {\includegraphics[width=0.95\linewidth]{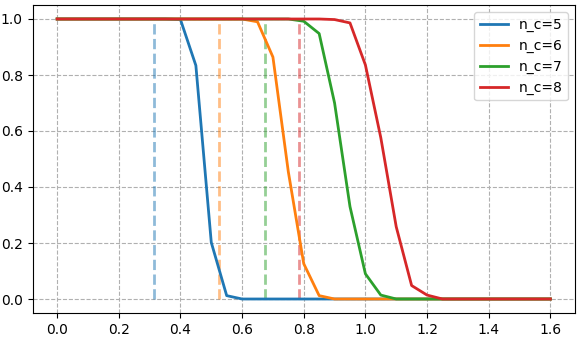}};
    
    \begin{scope}[x={(image.south east)},y={(image.north west)}]
      \node[below] at (0.5,0) {\scriptsize Perturbation magnitude (radians)};


      \node[rotate=90, above] at (0,0.5) {\scriptsize Retrieval success rate};
      
      
    \end{scope}
  \end{tikzpicture}
  \caption{Numerical validation of basin of attraction bounds for
    honeycomb networks with $m=400$ cycles. The solid lines show the
    empirical success rate of memory retrieval under random
    perturbations of varying magnitude. The vertical dashed lines mark
    the theoretical lower bounds from Theorem~\ref{thm:basin}. For
    each cycle size, we randomly selected 500 stable phase-locked
    configurations and applied independent random perturbations at
    each magnitude level, then simulated the Kuramoto dynamics to
    determine if the system converged to the original phase-locked
    configuration.}
  \label{fig:basin_numerics}
\end{figure}

To validate these theoretical bounds and assess their tightness, we
performed numerical simulations on honeycomb networks with $m = 400$
cycles for various cycle sizes. Figure~\ref{fig:basin_numerics} shows
the empirical success rate of memory retrieval as a function of
perturbation magnitude. The numerical experiments reveal that the
theoretical bounds are reasonably tight: the empirical basin radii are
approximately 0.40, 0.65, 0.80, and 0.95 radians for
$n_c = 5, 6, 7, 8$ respectively, compared to theoretical guarantees of
0.31, 0.52, 0.67, and 0.79 radians. The theoretical bounds are thus
within 23\% of the numerically observed basin sizes, indicating that
Theorem~\ref{thm:basin} provides practically useful guarantees rather
than overly conservative estimates.

These results reveal a fundamental tradeoff between memory capacity
and robustness. Comparing $n_c = 5$ and $n_c = 8$: the Kuramoto
honeycomb network with $n_c = 8$ provides basin radii approximately
twice as large as the case with $n_c = 5$, offering superior noise
tolerance. However, this comes at the cost of reduced memory
capacity. The network with $n_c = 8$ ($n = 7m + 1$) has a memory
capacity
$$C_8 = \frac{3^{(n-1)/7}}{n} \approx \frac{1.17^{n-1}}{n},$$
while the network with $n_c = 5$ ($n = 4m + 1$) features
$$C_5 = \frac{3^{(n-1)/4}}{n} \approx \frac{1.32^{n-1}}{n} .$$
The choice of $n_c$ thus allows designers to balance storage density
and retrieval robustness based on application needs.

\section{Hardware Implementation with Charge-Density-Wave
  Oscillators}\label{sec:cdw}
A key advantage of oscillator-based associative memory is the potential for
physical implementation using limit cycle oscillators. Many physical oscillator
systems—including van der Pol oscillators and charge-density-wave (CDW)
oscillators—exhibit synchronization behaviors analogous to Kuramoto oscillators
\cite{brown2004phase,nakao2016phase,brown2025charge}. Under weak coupling
conditions, the phase dynamics of these limit cycle oscillators can be reduced
to Kuramoto-type equations, allowing the theoretical guarantees established in
Section~\ref{sec:main_results} to be realized in hardware. We now demonstrate
this realizability through simulations of coupled CDW oscillator networks. Our
analysis uses the results developed in \cite{brown2025charge}, to which we refer
the reader for all derivations and a description of the underlying physics.

CDW devices based on 1T-TaS$_2$ are nanoscale quantum materials that exhibit
hysteretic switching between high-resistance and low-resistance states due to
electric-field-induced phase transitions. When a CDW device is biased with DC
voltage $V_{\text{DC}}$ through a series load resistor $R_S$,
the current-voltage hysteresis produces stable limit-cycle oscillations in the
output voltage $V_O(t)$.
The output voltage dynamics are
described by
\begin{equation}\label{eq:v_dynamics}
  \dot{V}_O(t) =
  \begin{cases}
    -\frac{1}{R_{\text{ch}} C} V_O(t) + \frac{1}{R_{\text{ch}} C} V_{\text{ch}}
    & \text{(charging)},\\
    -\frac{1}{R_{\text{dis}} C} V_O(t) + \frac{1}{R_{\text{dis}} C}
    V_{\text{dis}} & \text{(discharging)},
  \end{cases}
\end{equation}
where $C$ is the parasitic capacitance, $R_H$ and $R_L$ are the high and low
resistance states, and the effective parameters~are
\begin{align*}
  R_{\text{ch}} &= \frac{R_H R_S}{R_H + R_S}, \quad V_{\text{ch}} =
                  \frac{R_H}{R_H + R_S} V_{\text{DC}},\\ 
R_{\text{dis}} &= \frac{R_L R_S}{R_L + R_S}, \quad V_{\text{dis}} =
                 \frac{R_L}{R_L + R_S} V_{\text{DC}}. 
\end{align*}


To implement the honeycomb network topology, we couple CDW oscillators through
resistors $R_{ij}$. 
The voltage dynamics of the $i$-th oscillator in the coupled network~are
\begin{equation}\label{eq:coupled_voltage}
  \dot{V}_{O_i}(t) = f(V_{O_i}(t)) - \sum_{j=1}^{n} \frac{1}{R_{ij}C} (V_{O_i}(t) - V_{O_j}(t)),
\end{equation}
where $f(V_O)$ represents the free-running dynamics in
\eqref{eq:v_dynamics}, and the coupling term arises from currents
$I_{ij} = (V_{O_i} - V_{O_j})/R_{ij}$ flowing through the coupling
resistors.



By appropriately selecting the load resistance  of the CDW oscillator \cite{brown2025charge}, its voltage waveform $V_O(t)$ exhibits periodic oscillations. The instantaneous phase $\theta_i(t)$ of the $i$-th oscillator can be defined such that $\theta_i(t)$ increases uniformly in time in the absence of coupling, i.e., $\dot{\theta}_i = \omega_0 = 1/(T_{\text{ch}} + T_{\text{dis}})$, where $T_{\text{ch}}$ and $T_{\text{dis}}$ are the charging and discharging times, respectively, within one limit cycle.

{When coupled to the network via resistors $R_{ij}$, the phase evolution is governed by
\begin{align*}
\dot{\theta}_i(t) = \omega_0 + \sum_{j=1}^{N} \frac{1}{R_{ij}C} \Gamma(\theta_j(t) - \theta_i(t)),
\end{align*}
where $\Gamma(\phi)$ is the phase interaction function defined by averaging the perturbation effect over one cycle:
\begin{equation}\label{eq: Gamma function}
\Gamma(\phi) = \frac{1}{2\pi}\int_{0}^{2\pi} Z(\phi + \psi) \left( V_O(\phi + \psi) - V_O(\psi) \right) \mathrm{d}\psi.
\end{equation}
Here, $Z(\cdot)$ denotes the $2\pi$-periodic phase response curve, which quantifies the oscillator's sensitivity to small perturbations at different points along its limit cycle. As established in \cite{brown2025charge}, $Z(\cdot)$ can be determined numerically by applying impulsive perturbations to the limit cycle voltage and observing the induced phase shifts. Once $Z(\cdot)$ is obtained, the interaction function $\Gamma(\cdot)$ is computed via the integral in \eqref{eq: Gamma function}. Previous analysis \cite{brown2025charge} indicates that for suitably chosen circuit parameters, the interaction function is well-approximated by a sinusoidal form, i.e., $\Gamma(\phi) \approx A \sin(\phi)$. Consequently, the coupled CDW network dynamics effectively reduce to the Kuramoto model in \eqref{eq:kuramoto}, with coupling weights proportional to $(R_{ij}C)^{-1}$. This  allows us to leverage the theoretical guarantees derived for Kuramoto networks to interpret the synchronization behavior of physical CDW oscillator implementations.}

We now simulate a honeycomb network with $m=2$ cycles and $n_c=5$
using the CDW oscillator model. Circuit parameters are:
$R_H = 2.5\text{k}\Omega$, $R_L = 500\Omega$, $C = 10\text{pF}$,
$R_S = 1\text{k}\Omega$, $V_{\text{DC}} = 3\text{V}$, and
$R_{ij} = 3.5\text{k}\Omega$.

We initialize the network with voltages corresponding to perturbed
versions of two distinct target phase-locked configurations. For cycle
1 (nodes 1-5), the initial phases are set near
$[0, 2\pi/5, 4\pi/5, 6\pi/5, 8\pi/5]$ (the configuration with
$k_1 = 1$) by adding perturbations
$[0.064\pi, -0.049\pi, 0.065\pi, -0.002\pi, 0.082\pi]$. For cycle 2
(nodes 5-9, sharing node 5 with cycle 1), initial phases are set near
$[8\pi/5, 6\pi/5, 4\pi/5, 2\pi/5, 0]$ (the configuration with
$k_2 = -1$) with perturbations
$[0.082\pi, 0.085\pi, 0.044\pi, -0.088\pi]$. These perturbed initial
conditions test whether the network converges to the intended
phase-locked configurations.

Figure~\ref{fig:cdw_simu}(b,d) shows the steady-state voltage
waveforms. The time intervals $\Delta T_i$ between consecutive voltage
peaks are equal within each cycle ($\Delta T_1 = \cdots = \Delta T_5$
and $\Delta T_6 = \cdots = \Delta T_{10}$), confirming frequency
synchronization. Figure~\ref{fig:cdw_simu}(c,e) displays the phase
differences between consecutive oscillators after synchronization:
nodes 1-5 converge to phase differences of $+2\pi/5$, while nodes 6-10
converge to $-2\pi/5$, matching the theoretical predictions of
Theorem~\ref{thm:capacity}.

\begin{figure}[t]
    \centering
    \includegraphics[width=1\linewidth]{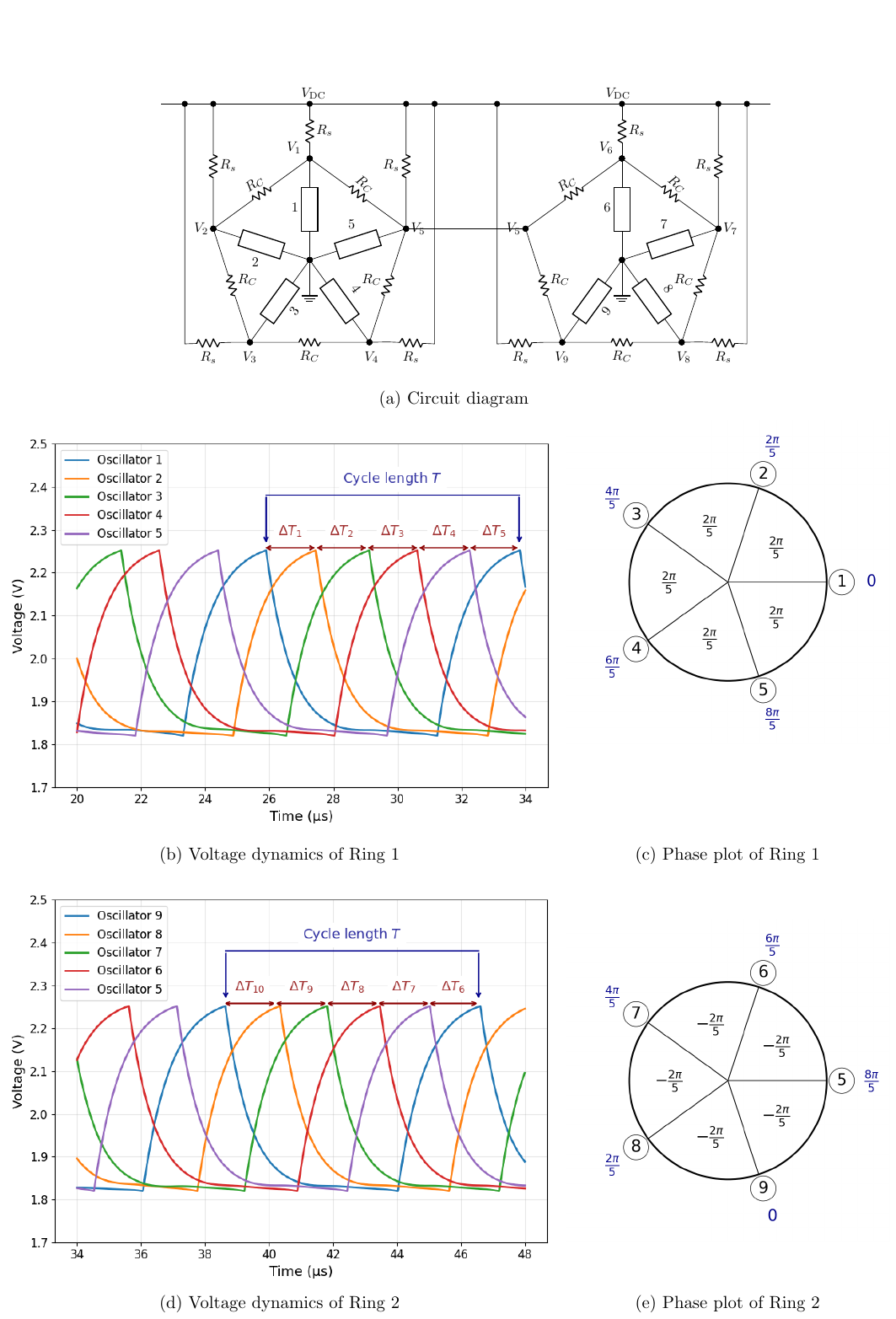}
    \caption{Numerical simulation of CDW oscillator network
      implementing a honeycomb topology with $m=2$ cycles and
      $n_c=5$. (a) Circuit topology showing two cycles sharing node
      5. (b,d) Steady-state voltage waveforms for cycles 1 and 2,
      demonstrating synchronized oscillations with characteristic
      phase delays. (c,e) Phase circle diagrams showing uniform
      $2\pi/5$ phase spacing: counterclockwise in cycle 1 and
      clockwise in cycle 2.}
    \label{fig:cdw_simu}
\end{figure}

These results demonstrate that the phase-locked configurations
predicted by our Kuramoto analysis can be realized in physical CDW
oscillator networks, validating the practical feasibility of
oscillator-based associative memory with exponential capacity and
robust retrieval guarantees.

\section{Conclusion}\label{sec:conclusion}
We have demonstrated that Kuramoto oscillator networks on honeycomb
graphs achieve exponential memory capacity for associative memory. A
honeycomb network with $m$ cycles of size $n_c$ supports
$(2\lceil n_c/4 \rceil - 1)^m$ distinct stable phase-locked
configurations, requiring only sparse local connectivity. We provided
a complete characterization of stable configurations through the
phase-cohesiveness property and an explicit encoding scheme mapping
discrete patterns to these configurations. Furthermore, we proved that
each memory's basin of attraction maintains a guaranteed minimum size
independent of network scale, ensuring robust retrieval when
oscillators are perturbed below a threshold of $\frac{\pi r}{2n_c}$
radians. Numerical simulations of charge-density-wave oscillator
networks validate these theoretical predictions, demonstrating
practical realizability in neuromorphic hardware.

Future work includes adapting this architecture for pattern denoising
by training network weights while preserving topology and
phase-cohesiveness \cite{krotov2016DenseAssociativeMemory}, extending
the analysis to other network topologies, and experimental validation
using physical oscillator arrays and neuromorphic hardware.

\bibliographystyle{IEEEtran}
\bibliography{./bib/alias,./bib/Main,./bib/New,./bib/AO--12-30-25}

\begin{thebibliography}{10}
\providecommand{\url}[1]{#1}
\csname url@samestyle\endcsname
\providecommand{\newblock}{\relax}
\providecommand{\bibinfo}[2]{#2}
\providecommand{\BIBentrySTDinterwordspacing}{\spaceskip=0pt\relax}
\providecommand{\BIBentryALTinterwordstretchfactor}{4}
\providecommand{\BIBentryALTinterwordspacing}{\spaceskip=\fontdimen2\font plus
\BIBentryALTinterwordstretchfactor\fontdimen3\font minus
  \fontdimen4\font\relax}
\providecommand{\BIBforeignlanguage}[2]{{%
\expandafter\ifx\csname l@#1\endcsname\relax
\typeout{** WARNING: IEEEtran.bst: No hyphenation pattern has been}%
\typeout{** loaded for the language `#1'. Using the pattern for}%
\typeout{** the default language instead.}%
\else
\language=\csname l@#1\endcsname
\fi
#2}}
\providecommand{\BIBdecl}{\relax}
\BIBdecl

\bibitem{JJHopfield:82}
J.~J. Hopfield, ``Neural networks and physical systems with emergent collective
  computational abilities,'' \emph{Proceedings of the National Academy of
  Sciences}, vol.~79, no.~8, pp. 2554--2558, 1982.

\bibitem{krotov2016DenseAssociativeMemory}
D.~Krotov and J.~J. Hopfield, ``Dense {{Associative Memory}} for {{Pattern
  Recognition}},'' in \emph{Advances in {{Neural Information Processing
  Systems}}}, vol.~29.\hskip 1em plus 0.5em minus 0.4em\relax Curran
  Associates, Inc., 2016.

\bibitem{kudithipudi2025NeuromorphicComputingScale}
D.~Kudithipudi, C.~Schuman, C.~M. Vineyard, T.~Pandit, C.~Merkel, R.~Kubendran,
  J.~B. Aimone, G.~Orchard, C.~Mayr, R.~Benosman, J.~Hays, C.~Young,
  C.~Bartolozzi, A.~Majumdar, S.~G. Cardwell, M.~Payvand, S.~Buckley,
  S.~Kulkarni, H.~A. Gonzalez, G.~Cauwenberghs, C.~S. Thakur, A.~Subramoney,
  and S.~Furber, ``Neuromorphic computing at scale,'' \emph{Nature}, vol. 637,
  no. 8047, pp. 801--812, Jan. 2025.

\bibitem{LP-IG-DG:86}
L.~Personnaz, I.~Guyon, and G.~Dreyfus, ``Collective computational properties
  of neural networks: New learning mechanisms,'' \emph{Physical Review A},
  vol.~34, no.~5, p. 4217, 1986.

\bibitem{demircigil2017exponentialAssociativeMemory}
M.~Demircigil, J.~Heusel, M.~Löwe, S.~Upgang, and F.~Vermet, ``On a model of
  associative memory with huge storage capacity,'' \emph{Journal of Statistical
  Physics}, 02 2017.

\bibitem{hillar2018RobustExponentialMemory}
C.~J. Hillar and N.~M. Tran, ``Robust {{Exponential Memory}} in {{Hopfield
  Networks}},'' \emph{The Journal of Mathematical Neuroscience}, vol.~8, no.~1,
  p.~1, Dec. 2018.

\bibitem{torrejon2017NeuromorphicComputingNanoscale}
J.~Torrejon, M.~Riou, F.~A. Araujo, S.~Tsunegi, G.~Khalsa, D.~Querlioz,
  P.~Bortolotti, V.~Cros, K.~Yakushiji, A.~Fukushima, H.~Kubota, S.~Yuasa,
  M.~D. Stiles, and J.~Grollier, ``Neuromorphic computing with nanoscale
  spintronic oscillators,'' \emph{Nature}, vol. 547, no. 7664, pp. 428--431,
  Jul. 2017.

\bibitem{musa2025DenseAssociativeMemory}
K.~Musa, S.~Kumar, M.~Katidis, and Y.-P. Huang, ``Dense associative memory in a
  nonlinear optical {{Hopfield}} neural network,'' \emph{Physical Review
  Applied}, Dec. 2025.

\bibitem{choi2025HardwareImplementationRing}
W.~Choi, T.~van Bodegraven, J.~Verest, O.~Maher, D.~F. Falcone, A.~L. Porta,
  D.~Jubin, B.~J. Offrein, S.~Karg, V.~Bragaglia, and A.~{Todri-Sanial},
  ``Hardware {{Implementation}} of {{Ring Oscillator Networks Coupled}} by
  {{BEOL Integrated ReRAM}} for {{Associative Memory Tasks}},'' Mar. 2025.

\bibitem{TN-FCH-YCL:04}
T.~Nishikawa, F.~C. Hoppensteadt, and Y.~C. Lai, ``Oscillatory associative
  memory network with perfect retrieval,'' \emph{Physica D: Nonlinear
  Phenomena}, vol. 197, no. 1-2, pp. 134--148, 2004.

\bibitem{TN-YCL-FC:04}
T.~Nishikawa, Y.~C. Lai, and F.~C. Hoppensteadt, ``Capacity of oscillatory
  associative-memory networks with error-free retrieval,'' \emph{Physical
  Review Letters}, vol.~92, no.~10, p. 108101, 2004.

\bibitem{izhikevich1999weakly}
E.~M. Izhikevich, ``Weakly pulse-coupled oscillators, fm interactions,
  synchronization, and oscillatory associative memory,'' \emph{IEEE Trans.
  Neural Netw.}, vol.~10, no.~3, pp. 508--526, 1999.

\bibitem{Doerfler2014}
F.~D{\"o}rfler and F.~Bullo, ``Synchronization in complex networks of phase
  oscillators: A survey,'' \emph{Automatica}, vol.~50, no.~6, pp. 1539--1564,
  2014.

\bibitem{RWH-KK:15}
R.~W. H{\"o}lzel and K.~Katharina, ``Stability and long term behavior of a
  {Hebbian} network of {Kuramoto} oscillators,'' \emph{SIAM J. Appl. Dyn.
  Syst.}, vol.~14, no.~1, pp. 188--201, 2015.

\bibitem{XZ-ZL-XX:20}
X.~Zhao, Z.~Li, and X.~Xue, ``Stability in a {Hebbian} network of {Kuramoto}
  oscillators with second-order couplings for binary pattern retrieve,''
  \emph{SIAM J. Appl. Dyn. Syst.}, vol.~19, no.~2, pp. 1124--1159, 2020.

\bibitem{nagerl2025HigherOrderKuramotoOscillator}
J.~Nagerl and N.~G. Berloff, ``Higher-{{Order Kuramoto Oscillator Network}} for
  {{Dense Associative Memory}},'' Jul. 2025.

\bibitem{guo2025OscillatoryAssociativeMemory}
T.~Guo, A.~Ogranovich, A.~R. Venkatakrishnan, M.~R. Shapiro, F.~Bullo, and
  F.~Pasqualetti, ``Oscillatory {{Associative Memory}} with {{Exponential
  Capacity}},'' in \emph{Proceedings of the 64th {{IEEE Conference}} on
  {{Decision}} and {{Control}} ({{CDC}} 2025)}, Apr. 2025.

\bibitem{brown2004phase}
E.~Brown, J.~Moehlis, and P.~Holmes, ``On the phase reduction and response
  dynamics of neural oscillator populations,'' \emph{Neural computation},
  vol.~16, no.~4, pp. 673--715, 2004.

\bibitem{nakao2016phase}
H.~Nakao, ``Phase reduction approach to synchronisation of nonlinear
  oscillators,'' \emph{Contemporary Physics}, vol.~57, no.~2, pp. 188--214,
  2016.

\bibitem{brown2025charge}
J.~O. Brown, T.~Guo, F.~Pasqualetti, and A.~A. Balandin, ``Charge-density-wave
  quantum oscillator networks for solving combinatorial optimization
  problems,'' \emph{Physical Review Applied}, vol.~24, no.~2, p. 024040, 2025.

\bibitem{Jafarpour2019Flows}
S.~Jafarpour, E.~Y. Huang, K.~D. Smith, and F.~Bullo, ``Flow and elastic
  networks on the n-torus: Geometry, analysis, and computation,'' \emph{SIAM
  Rev.}, vol.~64, pp. 59--104, 2019.

\bibitem{PAA-RM-BA:05}
P.-A. Absil, R.~Mahony, and B.~Andrews, ``Convergence of the iterates of
  descent methods for analytic cost functions,'' \emph{SIAM Journal on Control
  and Optimization}, vol.~6, no.~2, pp. 531--547, 2005.

\end{thebibliography}

\begin{appendix}
  
  \subsection{Phase differences and stability}
  To characterize the set of stable phase-locked configurations in
  Theorem~\ref{thm:capacity}, we need to establish specific geometric
  properties of stable phase-locked configurations. In particular,
   the phase difference between neighboring oscillators must be strictly
  smaller than $\pi/2$. This property is essential to derive
  the structure of the stable configurations in
  Theorem~\ref{thm:capacity}.

  \begin{lemma}{\emph{\textbf{\emph{(Phase cohesiveness)}}}}\label{lem:phase
      cohesiveness}
    For the Kuramoto network on the honeycomb graph $\mc{G}_{m}^{n_c}$, any
    stable phase-locked configuration $\theta^*$ satisfies
    $|\theta_i^* - \theta_j^*| < \pi/2$ for all adjacent oscillators $i$ and
    $j$.  {Conversely, any phase-locked configuration with
      $|\theta_i^* - \theta_j^*| \ge \pi/2$ for some adjacent oscillators $i$
      and $j$ is unstable. \oprocend}
\end{lemma}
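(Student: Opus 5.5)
The plan is to exploit the gradient structure of \eqref{eq:kuramoto_identical}. The dynamics are $\dot\theta = -\nabla E(\theta)$ with the analytic energy $E(\theta) = -\sum_{\{i,j\}\in\mathcal E}\cos(\theta_i-\theta_j)$. The Jacobian at an equilibrium $\theta^*$ is $-L(w)$, where $L(w)$ is the Laplacian of $\mathcal{G}_m^{n_c}$ with (possibly negative) edge weights $w_{ij}=\cos(\theta_i^*-\theta_j^*)$. For analytic gradient flows, Lyapunov stability of an equilibrium is equivalent to it being a local minimizer of $E$ (Absil--Mahony--Andrews \cite{PAA-RM-BA:05}). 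Hence an equilibrium is unstable whenever $L(w)$ has a negative eigenvalue, or more generally whenever $\theta^*$ is not a local minimum of $E$ modulo rotations. So it suffices to show that any equilibrium with some adjacent difference of magnitude $\ge\pi/2$ is not a local minimum.

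\emph{Step 1 (structure of equilibria).} I will first show that within each cycle $\mathcal C_p$ all edges carry the same ``flow'' $s_p=\sin(\theta_{i+1}^*-\theta_i^*)$, with edges oriented along the cycle. At a degree-two node, the equilibrium condition directly forces the flows on its two edges to be equal. At a shared node, I use that it is a cut vertex. Summing $\dot\theta_i=0$ over all nodes on one side of the cut vertex gives, by antisymmetry of $\sin$, zero net flow from that block into the cut vertex. Hence the two edges of $\mathcal C_p$ at the shared node carry equal flow as well. Consequently each edge difference in $\mathcal C_p$ is either $a_p$ or $\pi-a_p$ (mod $2\pi$), where $a_p=\arcsin s_p\in[-\pi/2,\pi/2]$. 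The corresponding weights are therefore $\pm c_p$ with $c_p=\cos a_p\ge0$.

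\emph{Step 2 (reduction to one cycle).} The quadratic form $x^\T L(w)x=\sum_e w_e(x_i-x_j)^2$ splits as a sum over the cycles. Given a vector $y$ on the nodes of a single cycle $\mathcal C_p$, I extend it to the whole graph by setting it equal to the value at the corresponding cut vertex on each component hanging off that cycle. The other blocks then contribute zero, so a negative direction for one cycle's weighted Laplacian is a negative direction for $L(w)$. This reduces the analysis to a weighted cycle with weights $\pm c$, with $c>0$ for now.
\begin{itemize}
\item If at least two edges have weight $-c$, I take $y$ equal to $0$ on one arc and $1$ on the complementary arc, with the two cut edges being negative ones. This gives $y^\T L y=-2c<0$.
\item If exactly one edge has weight $-c$, the remaining $n_c-1$ edges in series have effective conductance $c/(n_c-1)$. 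I take $y$ linear along the positive path, increasing by $1/(n_c-1)$ per edge, so that the negative edge spans a jump of $1$. Then $y^\T L y = c/(n_c-1) - c<0$, since $n_c\ge5$.
\end{itemize}
Either way, $L(w)$ has a negative eigenvalue, the Jacobian has a positive eigenvalue, and the equilibrium is unstable. If $|\theta_i^*-\theta_j^*|>\pi/2$ on some edge, that edge carries weight $-c<0$, so one of these two cases applies.

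\emph{Step 3 (the degenerate case $|\theta_i^*-\theta_j^*|=\pi/2$, the main obstacle).} Here the flow satisfies $|s_p|=1$, so $c_p=0$ and every edge of $\mathcal C_p$ has difference exactly $\sigma\pi/2$ for a common sign $\sigma$. The Hessian block for this cycle vanishes, so linearization gives no information and I must go to higher order. Perturb only the phases of $\mathcal C_p$, with edge increments $\delta_e$ satisfying $\sum_e\delta_e=0$; the other blocks are handled by rigid shifts as in Step 2. Using $\cos(\sigma\pi/2+\delta)=-\sigma\sin\delta$, the change in energy is
\[
E-E^*=\sigma\sum_e\sin\delta_e=-\frac{\sigma}{6}\sum_e\delta_e^3+O(|\delta|^5),
\]
because the linear term vanishes by the constraint. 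I can choose $\delta_e$ with $\sum_e\delta_e=0$ and $\sigma\sum_e\delta_e^3>0$: take one edge $+2\epsilon$ and two edges $-\epsilon$, or the negative of this, so the cube sum is $\pm6\epsilon^3$. Then $E<E^*$ arbitrarily close to $\theta^*$, so $\theta^*$ is not a local minimum and hence is unstable by \cite{PAA-RM-BA:05}. Together, Steps 1--3 establish the converse statement, and its contrapositive is the first claim of the lemma.
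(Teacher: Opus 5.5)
Your proposal is correct and follows essentially the same route as the paper's proof. Both use the Hessian as a $\cos$-weighted Laplacian, constant flow around each cycle, a ramp perturbation localized on one cycle when some edge exceeds $\pi/2$, and a cubic-order energy decrease in the degenerate $\pi/2$ case. Your cut-vertex summation makes rigorous the junction-node propagation that the paper only sketches, and your exact three-edge perturbation is cleaner than the paper's Taylor bookkeeping. One correction on attribution: the fact that a non-minimizing equilibrium of an analytic gradient flow is unstable is due to Absil and Kurdyka (2006), not to the convergence result of \cite{PAA-RM-BA:05}.
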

\begin{proof}
  The Kuramoto model with identical natural frequencies is a gradient system associated with the potential energy $U(\theta) = -\sum_{(i,j) \in \mathcal{E}} \cos(\theta_i - \theta_j)$. Consider any node $i$ in the honeycomb graph $\mc{G}_{m}^{n_c}$ and
  let $\mc{N}(i)$ denote the set of neighbors of node $i$. The Hessian matrix $H$ of the phase-locked configuration point $\theta^*$ is obtained as
  \[
      H = 
      \begin{cases} h_{ii}=
      \sum_{k \in \mathcal{N}(i)} \cos(\theta_i^* - \theta_k^*)     & \text{diagonal elements,}\\
      h_{ij}
      =-\cos(\theta_i^* - \theta_j^*) & \text{if } (i,j) \in \mathcal{E}, \\
      0 & \text{otherwise}.
      \end{cases}
  \]


  Next, we analyze the stability of $\theta^*$ by examining the change in potential energy $U(\theta)$ under small perturbations around $\theta^*$.
  Let $v = [v_1,\cdots, v_N]^\T$ be the perturbation vector around the phase-locked configuration $\theta^*$, where $N$ denotes the total number of nodes in the honeycomb graph. Using Taylor expansion the change in potential energy is given by
\begin{align*}
    \Delta U &= U(\theta^* + v) - U(\theta^*) \\
    &  = \nabla U(\theta^*)^\T v + \frac{1}{2} v^\T H v + o(\|v\|^3). 
\end{align*}
Since $\theta^*$ is a phase-locked configuration, we have $\nabla
U(\theta^*) = 0$. Therefore, the change in potential energy is
dominated by the quadratic form $v^\T H v$ for small perturbations. To
determine if $\theta^*$ is a local minimum (and thus stable), we need
to check if the quadratic form $v^\T H v$ is positive for all nonzero
perturbations $v$. We can express the quadratic form as
\begin{align*}
    &v^\T H v = \sum_{i=1}^N \sum_{j=1}^N v_i H_{ij} v_j = \underbrace{\sum_{i=1}^N v_i H_{ii} v_i}_\text{diagonal} + \underbrace{\sum_{i=1}^N \sum_{j \neq i} v_i H_{ij} v_j }_\text{off-diagonal} \\
      & = \sum_{i=1}^N  
      \sum_{j \in \mathcal{N}(i)} \!\!v_i^2 \cos(\theta_i^* - \theta_j^*)  - \sum_{i=1}^N \!\sum_{j \in \mathcal{N}(i)}\!\! v_i v_j  \cos(\theta_i^* - \theta_j^*) \\
      & = \frac{1}{2}\sum_{i=1}^N \sum_{j \in \mathcal{N}(i)}\!\!\cos(\theta_i^* - \theta_j^*) (v_i^2 - v_i v_j) 
\end{align*}
This sum counts every edge twice (once as $i \to j$ and once as
$j \to i$).  Since
$\cos(\theta_i^* - \theta_j^*) = \cos(\theta_j^* - \theta_i^*)$, we
can rewrite the sum to iterate over unique edges $(i,j)$ rather than
nodes.
To
do this, we group the $(i,j)$ term with the $(j,i)$ term:
\begin{align*}
  v^\T H v & = \sum_{(i,j) \in \mc{E}}  \Big[\cos(\theta_i^* - \theta_j^*)(v_i^2 - v_i v_j) + \\ & \;\;\;\;\;\; \cos(\theta_j^* - \theta_i^*)(v_j^2 - v_j v_i)\Big]  \\ 
  & = \sum_{(i,j) \in \mc{E}} \cos(\theta_i^* - \theta_j^*)(v_i^2 - 2 v_i v_j + v_j^2) \\
  & = \sum_{(i,j) \in \mc{E}} \cos(\theta_i^* - \theta_j^*)(v_i - v_j)^2.
\end{align*}
If $|\theta_i^* - \theta_j^*| < \pi/2$ for all $(i,j) \in \mc{E}$,
then $\cos(\theta_i^* - \theta_j^*) > 0$ everywhere. In this case,
$v^\T H v \ge 0$, vanishing only when $v_i = v_j$ for all connected
$(i,j)$ (i.e., $v \in \text{span}(\vect{1})$). Thus, $\theta^*$ is a
local minimum of the energy confirming its local exponential stability
(up to rotational symmetry).

Conversely, suppose there exists an edge $(u,w)$ on a cycle $\mc{C}_p$ where
$|\theta_u^* - \theta_w^*| > \pi/2$. We show $H$ is not positive
semi-definite. First note the equilibrium condition
$\sum_{k \in \mc{N}(i)} \sin(\theta_i^* - \theta_k^*) = 0$ at any
degree-2 node $i$ with neighbors $j,k$ implies
$\sin(\theta_j^* - \theta_i^*) = \sin(\theta_i^* - \theta_k^*)$. 
Since
every cycle in the honeycomb graph consists of nodes with degree 2
(except for nodes connecting cycles) and the shared nodes
connect cycles in a tree-like fashion, one can propagate this equality (inductively across cycles and junction nodes) to show for each cycle $\mc{C}_q$ that
$|\sin(\theta_i^* - \theta_j^*)| = c_q$ for some constant
$c_q \in [0, 1]$ and for all 
$(i,j) \in \mc{C}_q$.
     Consequently,
$\cos(\theta_i^* - \theta_j^*) = \pm \sqrt{1-c_p^2}$ for each $(i,j) \in \mc{C}_p$. Since
$|\theta_u^* - \theta_w^*| > \pi/2$, we must have
$\cos(\theta_u^* - \theta_w^*) = -{\scriptstyle \sqrt{1-c_p^2}} < 0$.

 We construct
a perturbation $v$ localized on $\mc{C}_p$ to exploit the negative curvature associated with this edge. Specifically, let $v_w = 0$ and $v_u = \delta$, where $\delta$ is an arbitrarily small scalar. For the remaining nodes in $\mc{C}_p$, we assign values that decrease linearly from $v_u$ back to $v_w$ along the path not containing the edge $(u,w)$. Since there are $n_c-1$ remaining edges, the values decrease by $\frac{\delta}{n_c - 1}$ across each edge. This
creates a ``ramp" of values around the cycle $\mc{C}_p$ that sums to
$\delta$ and has a large jump across the edge $(u,w)$. The
perturbation between node $u$ and $w$ is $\delta$, while the
perturbation between adjacent nodes on the cycle is
$\frac{\delta}{n_c-1}$. This choice of $v$ amplifies the negative
contribution from the edge $(u,w)$ while keeping contributions from
other edges small.  For any $(i,j) \notin$ $\mc{C}_p$, we keep
$v_i - v_j = 0$ such that they do not contribute to the energy
change. Thus, we can focus on the edges in $\mc{C}_p$ to evaluate
$v^\T H v$:
\vspace{-0.3em}
\begin{align}{\label{eq:negative_direction}}
\begin{aligned}
  v^\T H v &= \sum_{(i,j) \in \mc{C}_p} \cos(\theta_i^* - \theta_j^*) (v_i - v_j)^2 \\
  & = \cos(\theta_u^* - \theta_w^*) (v_u - v_w)^2 + \!\! \!\!\!\!\!\!
  \sum_{\substack{(i,j)\in \mc{C}_p\\ (i,j)\neq
      (u,w)}}\!\!\!\!\!\!\!\!
  \cos(\theta_i^* - \theta_j^*) (v_i - v_j)^2 \\
  & \le -\sqrt{1-c_p^2} \delta^2 + (n_c-1) \sqrt{1-c_p^2} \frac{\delta^2}{(n_c-1)^2} \\
  & = \delta^2\sqrt{1-c_p^2} \left( \frac{1}{n_c-1} -1\right) < 0.
   \end{aligned}
\end{align}
\vspace{0em}
Thus, $H$ has a negative direction, and $\theta^*$ is unstable. If
there are multiple edges $(i, j)$ in $\mc{C}_p$ satisfying
$|\theta_{i}^* - \theta_{j}^*| > \pi/2$, the corresponding cosine
terms $\cos(\theta_{i}^* - \theta_{j}^*)$ are all negative. Since the
squared difference terms $(v_{i} - v_{j})^2$ are non-negative, each
such edge contributes a non-positive term to the summation in
$v^\T H v$, which therefore remains negative.

Finally, consider the critical case where there exists an edge
$(u,w) \in \mc{C}_p$ such that $|\theta_u^* - \theta_w^*| =
\pi/2$. The equilibrium condition at degree-2 nodes implies
$|\sin(\theta_i^* - \theta_j^*)| = 1$ for all edges $(i,j)$ in
$\mc{C}_p$. Consequently, $\cos(\theta_i^* - \theta_j^*) = 0$ for all
$(i,j) \in \mc{C}_p$. For perturbations $v$ localized on $\mc{C}_p$,
the quadratic form $v^\T H v$ vanishes, thus we need to analyze
higher-order terms in the Taylor expansion of $\Delta U$. We have
\begin{align*}
  \Delta U^{(3)}(v) &= \frac{1}{6} \sum_{i,j,k} \frac{\partial^3 U}{\partial \theta_i \partial \theta_j \partial \theta_k} v_i v_j v_k \\
  &= -\frac{1}{6} \sum_{(i,j) \in \mc{E}} \sin(\theta_i^* - \theta_j^*) (v_i - v_j)^3 \\
  & = \frac{1}{6} \sum_{(i,j) \in \mc{E}} \sin(\theta_i^* - \theta_j^*) (v_j - v_i)^3. 
\end{align*}
 To construct a perturbation $v$ that makes
$\Delta U^{(3)}(v)$ negative, let $v_w = 0$ and $v_u = \delta > 0$. If
$\sin(\theta_u^* - \theta_w^*) = 1$, we set the differences $v_{i} - v_j$
along the path from $w$ to $u$ (excluding the edge $(u,w)$) 
equal to $\frac{\delta}{n_c-1}$. This creates a perturbation that has
a large positive jump across the edge $(u,w)$ and small negative jumps
across the other edges in the cycle. The contribution from the edge
$(u,w)$ to $\Delta U^{(3)}(v)$ is
$\frac{1}{6} \sin(\theta_u^* - \theta_w^*) (v_w - v_u)^3 =
-\frac{1}{6} (1) \delta^3 = -\frac{\delta^3}{6}$, which is
negative. The contributions from the other edges in $\mc{C}_p$ are of
order $\left(-\frac{\delta}{n_c-1}\right)^3$, which are positive but
smaller in magnitude. Summing these contributions, we have
\begin{equation}    
\begin{aligned}
  & \Delta U^{(3)}(v) = \frac{1}{6} \sum_{(i,j) \in \mc{C}_p} \sin(\theta_i^* - \theta_j^*) (v_j - v_i)^3 \\
  &= \frac{1}{6}\!\! \left[ \sin(\theta_u^* - \theta_w^*) (v_w - v_u)^3 + \!\!\!\!\!\!\!\!\!\!\sum_{\substack{(i,j)\in \mc{C}_p\\ (i,j)\neq (u,w)}}\!\!\!\!\!\!\!\! \sin(\theta_i^* - \theta_j^*) (v_j - v_i)^3 \right] \\
  &\le \frac{1}{6} \!\!\left[ (-\delta)^3 + (n_c-1)  \left(-\frac{\delta}{n_c-1}\right)^3 \right] \\ 
  &= \frac{\delta^3}{6} \left[ -1 + \frac{1}{(n_c-1)^2} \right].
\end{aligned}
\end{equation}
\end{proof}

Since $n_c \geq 5$, we have $(n_c-1)^2 > 1$, so the term in brackets
is negative, making $\Delta U^{(3)}(v) < 0$. If
$\sin(\theta_u^* - \theta_w^*) = -1$, we choose $v_u - v_w = -\delta$
instead, leading to the same result. In both cases, we can construct a
perturbation $v$ such that the potential energy decreases
locally. This implies that any configuration $\theta^*$ containing an
edge $(i, j)$ with $|\theta^*_i - \theta^*_j| = \pi/2$ cannot be a
local minimum of $U(\theta)$ and is unstable.


To conclude, we have shown that any configuration $\theta$ with
$|\theta_i - \theta_j| \geq \pi/2$ for some edge $(i,j)$ is not a
local minimum of $U$ and is therefore unstable. This completes the
proof.

\subsection{Proof of Theorem~\ref{thm:capacity}}\label{app:proof_capacity}
Let $\theta$ be the phases of the oscillators on the honeycomb graph $\mc G_m^{n_c}$ with cycles $\mc C_1,\dots,\mc C_m$. The winding number associated with the phases $\theta$ and the cycle $\mc C_p$ is
\begin{align}\label{eq: defn of winding numbers}
  q_{p}(\theta) = \frac{1}{2\pi} \sum_{\substack{(i,j) \in \mathcal{C}_p}} d_{cc}(\theta_{i},\theta_{j})
\end{align}
where
\begin{align}\label{eq: defn of cc difference}
    d_{cc}(\theta_i,\theta_j) = \theta_i - \theta_j + 2\pi q \in [-\pi,\pi[
\end{align}
is defined to be the unique real number congruent to $\theta_i - \theta_j$ modulo $2\pi$ which lies in $[-\pi,\pi[$. It is known that the winding number is always an integer \cite{Jafarpour2019Flows}. Further, $|d_{cc}(\theta_{i},\theta_{j})|$ is the distance between two angles on $\mathbb{S}^1$, so $\theta$ is phase-cohesive if and only if $|d_{cc}(\theta_{i},\theta_{j})| < \frac{\pi}{2}$ along every edge $(i,j) \in \mathcal{E}$. We first prove the following:
\begin{lemma}{\emph{\textbf{\emph{(Kirchhoff's phase-cohesive angle
          law)}}}}\label{lemma: kirchoff's phase-cohesive angle law}
  Consider the Kuramoto network on the honeycomb graph $\mathcal{G}_m^{n_c}$. If
  $|d_{cc}(\theta_{i},\theta_{j})| < \pi/2$ for all adjacent oscillators $i,j$, then
\[
|q_p(\theta)| \leq \left\lceil \frac{n_c}{4} \right\rceil - 1,
\]
for all $p \in \until{m}$.
\end{lemma}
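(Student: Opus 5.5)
The plan is to bound the winding number directly from its definition~\eqref{eq: defn of winding numbers}, using the phase-cohesiveness hypothesis on each of the $n_c$ edges of $\mc C_p$, and then to exploit integrality of $q_p(\theta)$.

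Fix $p \in \until{m}$ and orient $\mc C_p$ so that its edges $(i,j)$ are traversed consistently around the cycle. The cycle $\mc C_p$ has exactly $n_c$ edges. By hypothesis each term satisfies $|d_{cc}(\theta_i,\theta_j)| < \pi/2$. The triangle inequality then gives the strict bound
\[
|q_p(\theta)| \le \frac{1}{2\pi}\sum_{(i,j)\in\mc C_p} |d_{cc}(\theta_i,\theta_j)| < \frac{1}{2\pi}\, n_c \,\frac{\pi}{2} = \frac{n_c}{4}.
\]
Shared junction nodes play no role here, since the sum only involves edges of $\mc C_p$.

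Next we use that $q_p(\theta)$ is an integer, as recalled before the lemma from \cite{Jafarpour2019Flows}. It is therefore an integer with $|q_p(\theta)| < n_c/4$ strictly. For real $x>0$, the largest integer strictly less than $x$ is $\lceil x\rceil - 1$. Check the two cases:
\begin{itemize}
\item if $x$ is not an integer, $\lceil x\rceil - 1 = \lfloor x\rfloor$;
\item if $x$ is an integer, $\lceil x\rceil - 1 = x-1$.
\end{itemize}
Applying this with $x = n_c/4$ yields $|q_p(\theta)| \le \lceil n_c/4\rceil - 1$, as claimed.

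The only delicate point is the strictness. The bound must remain strict so that the case $4 \mid n_c$ gives $n_c/4 - 1$ rather than $n_c/4$. Strictness holds because each of the finitely many terms is strictly less than $\pi/2$, so their sum is strictly less than $n_c\pi/2$. The integrality itself is immediate: the sum of the $d_{cc}$ differences around a closed cycle telescopes modulo $2\pi$, so it lies in $2\pi\mathbb{Z}$.
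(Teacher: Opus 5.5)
Your proposal is correct and takes the same approach as the paper: the triangle inequality over the $n_c$ edges of $\mathcal{C}_p$ gives the strict bound $|q_p(\theta)| < n_c/4$, and integrality of the winding number then yields $|q_p(\theta)| \le \lceil n_c/4 \rceil - 1$. Your explicit check that $\lceil x \rceil - 1$ is the largest integer strictly below $x$, and your telescoping argument for integrality under a consistent orientation of the cycle, are small additions that the paper leaves implicit or delegates to \cite{Jafarpour2019Flows}.
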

\begin{proof}
     Because $|d_{cc}(\theta_{i},\theta_{j})| < \frac{\pi}{2}$ for all adjacent $i,j$, we have:
\begin{equation}\label{eq: bound on clockwise sum}
\begin{aligned}
  |q_p(\theta)| & = \frac{1}{2\pi} \left| \sum_{\substack{(i,j) \in \mathcal{C}_p}} d_{cc}(\theta_{i},\theta_{j})\right|  \\
  & \leq
\frac{1}{2\pi}  \sum_{\substack{(i,j) \in \mathcal{C}_p}} \left| d_{cc}(\theta_{i},\theta_{j}) \right| \\
 &< \frac{1}{2\pi} n_c \frac{\pi}{2} = \frac{n_{c}}{4}.
\end{aligned}
\end{equation}
Since $q_p(\theta)$ is an integer and from \eqref{eq: bound on clockwise sum}, we have \begin{align*}|q_p(\theta)| \leq \left\lceil \frac{n_c}{4} \right\rceil - 1. \end{align*}
\end{proof}

We are now ready to prove Theorem \ref{thm:capacity}.
We start by proving statements (i)
and (ii). Let $\theta^*$ be a phase-locked configuration satisfying
the condition (ii), that is, with constant phase differences along
each cycle, with consecutive oscillators in cycle $\mathcal{C}_p$
having phase difference $\frac{2\pi}{n_c} k_p$ for some integer
$k_p \in \{-\lceil n_c/4 \rceil + 1, \ldots, \lceil n_c/4 \rceil -
1\}$. Notice that
$\left| \frac{2\pi}{n_c} k_p \right| < \frac{\pi}{2}$ for every
$k_p \in \{-\lceil n_c/4 \rceil + 1, \ldots, \lceil n_c/4 \rceil -
1\}$:
\begin{align*}
  \max_{k_p } \left| \frac{2\pi k_p}{n_c}\right|&\leq \frac{2 \pi (\lceil \frac{n_c}{4}
                               \rceil - 1)}{n_c} < \frac{2\pi
                               \frac{n_c}{4}}{n_c} = \frac{\pi}{2}. 
\end{align*}
Then, from \cite{Doerfler2014}, due to rotational symmetry, the
Jacobian of the dynamics at $\theta^*$ has one zero eigenvalue
corresponding to uniform phase shifts. However, the phase difference
dynamics are exponentially stable: the Jacobian restricted to the
invariant subspace orthogonal to $(1,1,\ldots,1)^T$ has all
eigenvalues with negative real part, implying local exponential
stability of $\theta^*$ up to rotational equivalence. Finally, notice
the set $\mc S$ contains $N_\text{eq} = (2\lceil n_c/4 \rceil - 1)^m$,
since each $k_p$, with $p = 1,\ldots, m$, can take
$2\lceil n_c/4 \rceil - 1$ distinct values.

We now prove statement (iii).  First, Lemma \ref{lem:phase cohesiveness} implies that every stable phase-locked configuration $\theta^*$
of the honeycomb Kuramoto network satisfies
$|\theta_i^* - \theta_j^*| < \pi/2$ for all adjacent oscillators $i$
and $j$. Therefore, from Lemma \ref{lemma: kirchoff's phase-cohesive angle law}, the winding number $q_p(\theta^*)$ associated with $\theta^*$ and cycle $\mc{C}_p$ satisfies
\begin{align}{\label{eq: bound on winding numbers}}
   1-\left\lceil  \frac{n_c}{4} \right\rceil  \leq q_p(\theta^*) \leq \left\lceil \frac{n_c}{4} \right\rceil - 1.
\end{align}
Recall from \cite{Jafarpour2019Flows} that distinct phase-locked
configurations feature distinct winding vectors when the modulus of
the phase difference of adjacent oscillators is bounded by
$\pi/2$. 
Define the associated winding vector as
\begin{align*}
  q(\theta^*) = [ q_1(\theta^*),\dots, q_m(\theta^*) ].
\end{align*}
Since each $q_p(\theta^*)$, with $p = 1,\ldots, m$, can take
$2\lceil n_c/4 \rceil - 1$ distinct values, we conclude that that set
$\mc S$ contains all and only the exponentially stable phase-locked
configurations of the Kuramoto network \eqref{eq:kuramoto_identical} on
the honeycomb graph $\mathcal{G}_m^{n_c}$.


By Lemma \ref{lem:phase cohesiveness}, any stable phase-locked
configuration on the honeycomb graph must be phase-cohesive, meaning
$|\theta^*_i - \theta^*_j| < \pi/2$ for all adjacent nodes $i,
j$. Furthermore, as established in \cite{Jafarpour2019Flows}, distinct
phase-cohesive configurations are uniquely characterized by their
winding vectors. Consequently, the number of stable configurations is
bounded by the number of admissible winding vectors.

Following \eqref{eq: bound on winding numbers}, there are at most $2 \lceil n_{c}/4 \rceil - 1$ possible integer values for $q_{p}(\theta^*)$ corresponding to a phase-cohesive configuration. Since the honeycomb graph is formed by $m$ independent cycles, the total number of distinct phase-cohesive phase-locked configurations is at most
  \[
  \prod_{i=1}^m \left(2\left\lceil \frac{n_c}{4} \right\rceil - 1\right) = \left(2\left\lceil \frac{n_c}{4} \right\rceil - 1\right)^m = N_{\text{eq}}.
  \]
  Because we have already identified exactly $N_{\text{eq}}$ stable
  configurations in part (i),  no other stable
  configuration exists.

  \subsection{Proof of Theorem~\ref{thm:basin}}\label{app:proof_basin}

{
To prove Theorem~\ref{thm:basin}, we first establish the following general result on Kuramoto-model basins of attraction using Lyapunov-like arguments.

\begin{proposition}
{\emph{\textbf{\emph{(Infinity-Norm Weak Lyapunov Function)}}}}
\label{prop:infinity norm nonexpansion}
Let $\theta^*$ denote a phase-locked configuration of the Kuramoto oscillator network \eqref{eq:kuramoto_identical} on a symmetric, connected graph $\mathcal G=(\mathcal V,\mathcal E)$ with positive weights. Suppose for some $\gamma > 0$ that $\max_{(i,j)\in \mathcal E}|\theta_i^*-\theta_j^*| \le \frac{\pi}{2}-\gamma$ and further suppose $\theta(0)$ satisfies $\max_{i\in \mathcal{V}}|\theta_i(0)-\theta_i^*| < \gamma$. The following hold:
\begin{enumerate}
    \item $\max_{i\in \mathcal{V}}|\theta_i(t)-\theta_i^*|$ is nonincreasing in $t$.
    \item $\theta(0)$ is in a phase-locked configuration if and only if $\theta(0)$ is rotationally equivalent to $\theta^*$.
    \item As $t\rightarrow \infty$, $\theta(t)$ converges to a phase-locked configuration rotationally equivalent to $\theta^*$.
\end{enumerate}
\end{proposition}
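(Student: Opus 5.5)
We track the error $e_i(t)=\theta_i(t)-\theta_i^*$ and the quantity $V(t)=\max_i |e_i(t)|$. Throughout, we work with a lift of the trajectory to $\mathbb{R}^n$ and read the differences $\theta_i^*-\theta_j^*$ as real numbers chosen so that the edge hypothesis holds.

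\textbf{Step 1: $V$ is nonincreasing.} Since $\theta^*$ is an equilibrium,
\[
\dot e_i=\sum_j a_{ij}\bigl[\sin(\theta_j^*-\theta_i^*+e_j-e_i)-\sin(\theta_j^*-\theta_i^*)\bigr].
\]
We use a Danskin-type argument on the upper Dini derivative of $V$. Let $i$ be an index attaining $V$, and suppose first that $e_i=V>0$. Then for every neighbor $j$ we have $e_j-e_i\in[-2V,0]$. Write $\phi_{ij}=\theta_j^*-\theta_i^*$, so $|\phi_{ij}|\le \pi/2-\gamma$. Both arguments $\phi_{ij}$ and $\phi_{ij}+(e_j-e_i)$ then lie in $[-\pi/2-\gamma-2V+\gamma,\ \pi/2-\gamma]$.

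This interval is too wide, so we need care. Each of the two arguments lies in $]-\pi/2-\gamma,\pi/2[$ as long as $V<\gamma$. We would like both to stay within $]-\pi/2,\pi/2[$, where $\sin$ is increasing. The difference is $e_j-e_i\le 0$, which gives
\[
\sin(\phi_{ij}+e_j-e_i)\le\sin(\phi_{ij})
\]
provided $\sin$ is monotone on the segment joining the two arguments. The segment runs from $\phi_{ij}-2V$ up to $\phi_{ij}$. If part of it dips below $-\pi/2$, we instead use the symmetry $\sin(-\pi/2-x)=\sin(-\pi/2+x)$: the point $\phi_{ij}-2V\ge-\pi/2-\gamma-\,(2V-\gamma)$ reflects to a value that is still at most $\phi_{ij}$, because $2V<2\gamma$.

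This sub-step is the main obstacle. The factor of $2$ means the naive bound is not automatic, and the constant has to be chosen so that the reflected point still lies below $\phi_{ij}$. I would first check the worst case $\phi_{ij}=-\pi/2+\gamma$. There the reflected point is $-\pi/2+(2V-\gamma)$, which is $\le \phi_{ij}$ iff $V\le\gamma$, so the claim holds. With it, every summand of $\dot e_i$ is $\le 0$, hence $\dot e_i\le 0$. The case $e_i=-V$ is symmetric and gives $\dot e_i\ge0$. Danskin's lemma then yields $D^+V\le0$, proving (i). In particular $V(t)<\gamma$ for all $t$.

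\textbf{Step 2: (ii), uniqueness of equilibria in the box.} Suppose $\theta(0)$ is phase-locked with $V(0)<\gamma$. Every edge difference then satisfies $|\theta_i-\theta_j|<\pi/2-\gamma+2\gamma$, which can exceed $\pi/2$, so a cohesiveness argument alone does not suffice. Instead we use the monotonicity from Step 1 in a strict form. Let $i$ attain $\max_k e_k$; at an equilibrium $\dot e_i=0$. Every summand is $\le0$, and each vanishes only if $e_j=e_i$, because $\sin$ is strictly monotone or strictly reflected on the relevant segment. So all neighbors of $i$ share the maximal value, and by connectivity $e$ is constant. Hence $\theta(0)$ is a rotation of $\theta^*$; the converse is immediate.

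\textbf{Step 3: (iii), convergence.} The system is a gradient flow of the analytic potential $U$, and the trajectory stays in the compact set $\{V\le V(0)\}$ modulo rotation. The mean phase is conserved, since $\sum_i\dot\theta_i=0$ by the symmetry of the weights, so the trajectory itself is bounded. By Łojasiewicz's theorem \cite{PAA-RM-BA:05}, $\theta(t)$ converges to a single equilibrium $\bar\theta$ with $\max_i|\bar\theta_i-\theta_i^*|\le V(0)<\gamma$. By Step 2, $\bar\theta$ is rotationally equivalent to $\theta^*$.
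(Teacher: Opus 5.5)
Your proof is correct and follows essentially the same route as the paper. It uses the $\infty$-norm deviation as a weak Lyapunov function and the reflection symmetry of $\sin$ to absorb the factor $2V<2\gamma$ (you reflect about $-\pi/2$ at a positive extreme, the paper about $\pi/2$ at a negative one), then strict equality plus connectivity for (ii) and the analytic gradient-flow convergence theorem of \cite{PAA-RM-BA:05} for (iii). One slip to fix: your displayed lower bound should read $\phi_{ij}-2V\ge-\pi/2+\gamma-2V$ rather than $-\pi/2-2V$; this is the bound your worst-case check actually uses, and it gives $-\pi-\phi_{ij}+2V\le-\pi/2-\gamma+2V<-\pi/2+\gamma\le\phi_{ij}$ for every admissible $\phi_{ij}$, not only the extreme one.
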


\begin{proof}
We first prove (i). Let $V(\theta(t)) = \max_{k \in \mathcal{V}}|\theta_k(t) - \theta_k^*|$.
If $\theta(t) = \theta^*$, then $V(\theta(t)) = 0$, which is the global minimum and trivially nonincreasing.
Therefore, we assume $\theta(t) \neq \theta^*$, so $V(\theta(t)) > 0$.

Assume at time $t$ that $V(\theta(t)) < \gamma$. Let $i \in \mathcal{V}$ be such that $|\theta_i(t) - \theta_i^*| = V(\theta(t))$ (note that $i$ may not be unique). We evaluate the upper right Dini derivative $D^+V(\theta(t))$ \tguomargin{We need to introduce this}.
The analysis is broken into two cases: $\theta_i(t) > \theta_i^*$ and $\theta_i(t) < \theta_i^*$.

Let $\theta_i(t) < \theta_i^*$ (the case $\theta_i(t) > \theta_i^*$ is handled similarly using a symmetric argument).
Because $i$ holds the maximum negative deviation, $\theta_i(t) - \theta_i^* = -V(\theta(t))$. The neighboring oscillators $j$ satisfy $\theta_j^* - \theta_j(t) \le |\theta_j^* - \theta_j(t)| \le V(\theta(t)) = \theta_i^* - \theta_i(t)$ and, consequently,
\begin{equation}\label{eq: lower bound}
    \theta_j(t) - \theta_i(t) \ge \theta_j^* - \theta_i^*.
\end{equation}
We next proceed to establish an upper bound on $\theta_j(t) - \theta_i(t)$. By splitting up the difference $\theta_j(t) - \theta_i(t)$, we obtain
\begin{equation}
    \theta_j(t) - \theta_i(t) = (\theta_j(t) - \theta_j^*) + (\theta_j^* - \theta_i^*) + (\theta_i^* - \theta_i(t)).
\end{equation}
Since $\theta_j(t) - \theta_j^* \le |\theta_j(t) - \theta_j^*| \le V(\theta(t))$ and $\theta_i^* - \theta_i(t) = V(\theta(t))$, we can bound this sum by
\begin{equation}
    \theta_j(t) - \theta_i(t) \le 2V(\theta(t)) + (\theta_j^* - \theta_i^*).
\end{equation}
Using the assumption $V(\theta(t)) < \gamma$, we obtain:
\begin{equation}
    \theta_j(t) - \theta_i(t) < 2\gamma + (\theta_j^* - \theta_i^*).
\end{equation}
From the proposition hypotheses, we know $\theta_j^* - \theta_i^* \le \frac{\pi}{2} - \gamma$, which rearranges to $2\gamma \le \pi - 2(\theta_j^* - \theta_i^*)$. Substituting this into the inequality yields
\begin{equation}\label{eq: upper bound}
    \theta_j(t) - \theta_i(t) < \pi - 2(\theta_j^* - \theta_i^*) + (\theta_j^* - \theta_i^*) = \pi - (\theta_j^* - \theta_i^*).
\end{equation}
Combining the lower \eqref{eq: lower bound} and upper \eqref{eq: upper bound} bounds we conclude
\begin{equation}
    \theta_j^* - \theta_i^* \le \theta_j(t) - \theta_i(t) < \pi - (\theta_j^* - \theta_i^*).
\end{equation}
Since $|\theta_j^* - \theta_i^*| \le \frac{\pi}{2} - \gamma < \frac{\pi}{2}$ by hypothesis, and $\sin(x) \ge \sin(a)$ whenever $a \le x < \pi - a$ for $|a| < \frac{\pi}{2}$ (see Figure \ref{fig: visualization of sin monotonicity property} for a visualization), we conclude that:
\begin{equation}
    \sin(\theta_j(t) - \theta_i(t)) \ge \sin(\theta_j^* - \theta_i^*)
\end{equation}
with equality holding only when $\theta_j(t) - \theta_i(t) = \theta_j^* - \theta_i^*$.
Consequently, the dynamics of node $i$ satisfy:
\begin{equation}\label{eq: full velocity inequality for the maximally perturbed}
    \dot{\theta}_i(t) = \sum_{j \in N_i} a_{ij} \sin(\theta_j(t) - \theta_i(t)) \ge \sum_{j \in N_i} a_{ij} \sin(\theta_j^* - \theta_i^*) = 0
\end{equation}
where the last equality follows from the fact that $\theta^*$ is an equilibrium, and equality holds if and only if $\theta_j(t) - \theta_i(t) = \theta_j^* - \theta_i^*$ for every neighbor $j$ of $i$. Since $\dot \theta_i(t) \ge 0$ and since $\theta_i(t) < \theta_i^*$ we determine $D^+ |\theta_i(t)-\theta_i^*| \leq 0$.

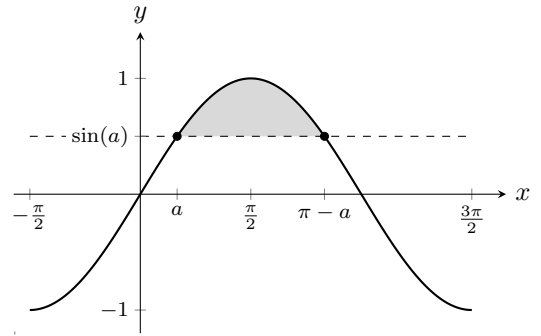
\begin{figure}[htbp]
    \centering
    \begin{tikzpicture}
        \begin{axis}[
            width=3.2in,
            height=2.2in,
            axis lines=middle,
            axis on top=true, 
            xmin=-1.8, xmax=5.2,
            ymin=-1.2, ymax=1.4,
            xtick={-1.5708, 0, 0.5236, 1.5708, 2.6180, 4.7124},
            xticklabels={$-\frac{\pi}{2}$, $0$, $a$, $\frac{\pi}{2}$, $\pi-a$, $\frac{3\pi}{2}$},
            ytick={-1, 0.5, 1},
            yticklabels={$-1$, $\sin(a)$, $1$},
            yticklabel style={fill=white, inner sep=2pt},
            xticklabel style={fill=white, inner sep=2pt}, 
            xlabel={$x$},
            ylabel={$y$},
            every axis x label/.style={at={(ticklabel* cs:1.0)}, anchor=west},
            every axis y label/.style={at={(ticklabel* cs:1.0)}, anchor=south},
            ticklabel style={font=\footnotesize},
            enlargelimits=false,
            clip=false,
        ]
            \addplot[name path=sine, domain=-1.5708:4.7124, samples=150, thick] {sin(deg(x))};
            
            \addplot[name path=line, domain=-1.5708:4.7124, dashed] {0.5};
            
            \addplot [fill=gray!30] fill between[of=sine and line, soft clip={domain=0.5236:2.6180}];
            
            \addplot[mark=*, mark size=1.5pt, only marks] coordinates {(0.5236, 0.5) (2.6180, 0.5)};
            
            \draw[dashed, gray] (1.5708, -1.2) -- (1.5708, 1.4);
            
        \end{axis}
    \end{tikzpicture}
    \caption{Visualization of the inequality $\sin(x) \geq \sin(a)$ over the interval $x \in [a, \pi-a]$.}
    \label{fig: visualization of sin monotonicity property}
\end{figure}

Then, since $D^+ |\theta_i(t)-\theta_i^*| \leq 0$ for any $i$ satisfying $|\theta_i(t)-\theta_i^*| = \max_{k \in \mathcal{V}}|\theta_k(t) - \theta_k^*|$, we conclude that $D^+V(\theta(t)) \le 0,$ meaning $\max_{k \in \mathcal{V}}|\theta_k(t) - \theta_k^*|$ is nonincreasing in $t$. This proves (i).

We now prove (ii). \emph{If direction ($\Leftarrow$):} This direction holds by the fact that any configuration rotationally equivalent to the phase-locked configuration $\theta^*$ must also be phase-locked.

\emph{Only if direction ($\Rightarrow$):} We again focus on the case where there exists a node $i$ satisfying $\theta^*_i - \theta_i(0) = \max_{k \in \mathcal{V}} |\theta_k(0)-\theta^*_k|$. We will show inductively that every node $j$ also satisfies $\theta_j(0)-\theta^*_j = \max_{k \in \mathcal{V}} |\theta_k(0)-\theta^*_k|$, which implies that $\theta(0)-\theta^*=(\max_{k \in \mathcal{V}} |\theta_k(0)-\theta^*_k|)(1,...,1)^{\top}$ and hence that $\theta(0)$ is rotationally equivalent to $\theta^*$.

As a base case for induction we have that $\theta^*_i - \theta_i(0) = \max_{k \in \mathcal{V}} |\theta_k(0)-\theta^*_k|$. For the inductive step, suppose a node $j$ satisfies $\theta_j(0)-\theta^*_j = \max_{k \in \mathcal{V}} |\theta_k(0)-\theta^*_k|$. Note that any phase-locked configuration of \eqref{eq:kuramoto_identical} is at equilibrium, so $\dot \theta_j = 0$. As discussed in the proof of part (i), we know equality holds in \eqref{eq: full velocity inequality for the maximally perturbed} if and only if every neighbor $\ell$ of $j$ satisfies $\theta_{j}(0)-\theta_{\ell}(0)=\theta^*_j-\theta^*_{\ell}$, or equivalently that $\theta_j(0)-\theta^*_j=\theta_{\ell}(0)-\theta^*_{\ell}$; further, the equality case must hold because $\dot\theta_j(0)=0$ by hypothesis. This implies that each neighbor $\ell$ of $j$ must satisfy $\theta_{\ell}(t)-\theta^*_{\ell} = \max_{k \in \mathcal{V}} |\theta_k(t)-\theta^*_k|$. Thus, any $j$ satisfying $\theta_j(0)-\theta^*_j = \max_{k \in \mathcal{V}} |\theta_k(0)-\theta^*_k|$ only has neighbors $\ell$ which satisfy the same property $\theta_{\ell}(t)-\theta^*_{\ell} = \max_{k \in \mathcal{V}} |\theta_k(t)-\theta^*_k|$; since the underlying graph is connected, we conclude by induction that every node $j$ satisfies $\theta_j(0)-\theta^*_j = \max_{k \in \mathcal{V}} |\theta_k(0)-\theta^*_k|$. As discussed in the first paragraph, $\theta(0)$ is rotationally equivalent to $\theta^*$ as desired.

We now prove (iii). We know that the Kuramoto oscillator network with symmetric weights is a gradient flow on $\R^{|\mathcal{V}|}$ with respect to an analytic energy function. Note that the open ball $B = \{\theta: \max_k |\theta_k-\theta_k^*| < \gamma\}$ is forward-invariant by part (i), since the infinity-norm distance to $\theta^*$ cannot increase (in particular, increase above $\gamma$) under the dynamics when starting in $B$. Then Theorem 2.2 of \cite{PAA-RM-BA:05} (regarding bounded analytic gradient flows) implies that any trajectory $\theta(t)$ starting in $B$ must converge to a unique equilibrium point. Further, since $\theta(t) \in B$ for all $t \geq 0$ the limit either lies in $B$ or on its boundary. To see that $\lim_{t \rightarrow \infty} \theta(t)$ lies in $B$ as opposed to on its boundary, note that $\| \theta(t) -\theta^*\|_{\infty}$ is nonincreasing with time; since $\|\theta(0)-\theta^*\| < \gamma$ we know that $\lim_{t \rightarrow \infty} \| \theta(t) -\theta^*\|_{\infty} < \gamma$ and thus the limit stays in $B$. Finally, we know from part (ii) that all equilibria (which are phase-locked configurations)
inside $B$ must be rotationally equivalent to $\theta^*$ and hence $\lim_{t \rightarrow \infty} \theta(t)$ is a phase-locked configuration rotationally equivalent to $\theta^*$.
\end{proof}

We are now ready to prove Theorem \ref{thm:basin} by combining the stability guarantees of Proposition \ref{prop:infinity norm nonexpansion} with our characterization of the honeycomb's stable equilibria from Theorem \ref{thm:capacity}

\begin{proof}[Proof of Theorem \ref{thm:basin}]
From Theorem III.1, adjacent oscillators in a stable phase-locked configuration $\theta^*$ have a maximum phase difference bounded by $\frac{2\pi}{n_c}(\lceil \frac{n_c}{4} \rceil - 1)$. Thus, the inequality in \eqref{eq:basin_bound} directly follows by applying result (iii) in Proposition \ref{prop:infinity norm nonexpansion}.

We now show the last equality of \eqref{thm:basin}. Notice that
  \begin{align*}
    &\frac{\pi}{2} - \left(\lceil \frac{n_c}{4} \rceil - 1\right)\frac{2\pi}{n_c} 
    = \frac{\pi n_c}{2n_c} - \frac{2\pi\lceil n_c/4 \rceil}{n_c} + \frac{2\pi}{n_c}\\
    &= \frac{\pi n_c + 4\pi - 4\pi\lceil n_c/4 \rceil}{2n_c} 
    = \frac{\pi(n_c + 4 - 4\lceil n_c/4 \rceil)}{2n_c}.
  \end{align*}
  Thus, we need to show that $n_c + 4 - 4\lceil n_c/4 \rceil = r$. We
  proceed by cases based on $n_c \bmod 4$:

  \noindent
  \text{Case 1:} $n_c = 4q$ for some integer $q \geq 1$. Then:
  \begin{itemize}
  \item $(n_c - 1) \bmod 4 = 3$, so $r = 3 + 1 = 4$,
  \item $\lceil n_c/4 \rceil = q$,
  \item $n_c + 4 - 4\lceil n_c/4 \rceil = 4q + 4 - 4q = 4 = r$.
  \end{itemize}

  \noindent
  \text{Case 2:} $n_c = 4q + 1$ for some integer $q \geq 0$. Then:
  \begin{itemize}
  \item $(n_c - 1) \bmod 4 = 0$, so $r = 0 + 1 = 1$,
  \item $\lceil n_c/4 \rceil = q + 1$,
  \item $n_c + 4 - 4\lceil n_c/4 \rceil = 4q + 1 + 4 - 4(q+1) = 1 = r$.
  \end{itemize}

  \noindent
  \text{Case 3:} $n_c = 4q + 2$ for some integer $q \geq 0$. Then:
  \begin{itemize}
  \item $(n_c - 1) \bmod 4 = 1$, so $r = 1 + 1 = 2$,
  \item $\lceil n_c/4 \rceil = q + 1$,
  \item $n_c + 4 - 4\lceil n_c/4 \rceil = 4q + 2 + 4 - 4(q+1) = 2 = r$.
  \end{itemize}

  \noindent
  \text{Case 4:} $n_c = 4q + 3$ for some integer $q \geq 0$. Then:
  \begin{itemize}
  \item $(n_c - 1) \bmod 4 = 2$, so $r = 2 + 1 = 3$,
  \item $\lceil n_c/4 \rceil = q + 1$,
  \item $n_c + 4 - 4\lceil n_c/4 \rceil = 4q + 3 + 4 - 4(q+1) = 3 = r$.
  \end{itemize}
  In all cases, $n_c + 4 - 4\lceil n_c/4 \rceil = r$, completing the proof.
\end{proof}
}

\end{appendix}

\end{document}